\documentclass[conference]{IEEEtran}
\IEEEoverridecommandlockouts

\usepackage[dvipsnames]{xcolor}
\usepackage{amsmath}
\newcommand*\linkcolours{ForestGreen}

\usepackage{times}
\usepackage{graphicx}
\usepackage{amssymb}
\usepackage{gensymb}
\usepackage{amsmath}
\usepackage{breakurl}

\usepackage{url,hyperref}
\hypersetup{
colorlinks,
linkcolor=\linkcolours,
citecolor=\linkcolours,
filecolor=\linkcolours,
urlcolor=\linkcolours}

\usepackage{algorithm}
\usepackage{algorithmic}

\usepackage[labelfont={bf},font=small]{caption}
\usepackage[none]{hyphenat}
\usepackage{float}
\usepackage{mathtools, cuted}

\usepackage[noadjust, nobreak]{cite}
\usepackage{url}
\usepackage{tabularx}
\usepackage{amsthm}
\usepackage{enumerate}
\usepackage{float}
\usepackage{amsmath}
\usepackage{pifont}

\newcolumntype{Y}{>{\centering\arraybackslash}X}

\usepackage{scrextend}

\usepackage[]{placeins}


\usepackage{placeins}

\usepackage{tikz}

\usepackage[framemethod=tikz]{mdframed}

\usepackage{afterpage}

\usepackage{stfloats}

\usepackage{atbegshi}
\newcommand{\handlethispage}{}
\newcommand{\discardpagesfromhere}{\let\handlethispage\AtBeginShipoutDiscard}
\newcommand{\keeppagesfromhere}{\let\handlethispage\relax}
\AtBeginShipout{\handlethispage}

\usepackage{comment}

\newtheorem{theorem}{Theorem}

\newtheorem{lemma}{Lemma}
\usepackage{environ} 
 
\NewEnviron{EQFONT}{%
    \scalebox{1}{$\BODY$} 
} 
 
\NewEnviron{ALGOFONT}{%
    \scalebox{0.5}{$\BODY$} 
} 
\def\BibTeX{{\rm B\kern-.05em{\sc i\kern-.025em b}\kern-.08em
    T\kern-.1667em\lower.7ex\hbox{E}\kern-.125emX}}
\begin{document}

\title{Query Complexity of $k$-NN based Mode Estimation}

\author{\IEEEauthorblockN{Anirudh Singhal}
\IEEEauthorblockA{\textit{Indian Institute of Technology Bombay} \\
singhalanirudh18@gmail.com}
\and
\IEEEauthorblockN{Subham Pirojiwala}
\IEEEauthorblockA{\textit{Indian Institute of Technology Bombay} \\
pirojiwalasubham@gmail.com}
\and
\IEEEauthorblockN{Nikhil Karamchandani}
\IEEEauthorblockA{\textit{Indian Institute of Technology Bombay} \\
nikhilk@ee.iitb.ac.in}
}

\maketitle

\begin{abstract}
Motivated by the mode estimation problem of an unknown multivariate probability density function, we study the problem of identifying the point with the minimum $k$-th nearest neighbor distance for a given dataset of $n$ points. We study the case where the pairwise distances are apriori unknown, but we have access to an oracle which we can query to get noisy information about the distance between any pair of points. For two natural  oracle models, we design a sequential learning algorithm, based on the idea of confidence intervals, which adaptively decides which queries to send to the oracle and is able to correctly solve the problem with high probability. We derive instance-dependent upper bounds on the query complexity of our proposed scheme and also demonstrate significant improvement over the performance of other baselines via extensive numerical evaluations. 
\end{abstract}


\section{Introduction}
Several problems in machine learning and signal processing are based on computing and processing distances over large datasets, for example high-dimensional clustering and quantization. In several applications, the algorithm might only have access to noisy estimates of the underlying pairwise distances, for example due to the limitations of the measurement process. In this work, we study the problem where we consider a set of points $\mathcal{X} = \{x_1, x_2, \ldots, x_n\} \subset \mathbb{R}^m$ and are interested in finding the point $x^{\star} \in \mathcal{X}$ which has the minimum $k$-nearest neighbor ($k$-NN) distance. This problem finds application in the classical problem of \textit{mode estimation} of an underlying multivariate probability density function $f$ from independent samples \cite{modal_age}, when using a mode estimator of the form $\arg\max_{x \in \mathcal{X} } f_k(x)$, where $f_k$ is the $k$-NN density estimator. This estimator was studied in \cite{ModeEstimationContinous} where it was proven to have several nice properties including consistency and minimax-optimal rates of convergence. 

If all the pairwise distances are available, then the problem of identifying $x^{\star}$ can be solved by sorting them appropriately. In this work, we assume that the algorithm does not know the distances apriori, but has access to an oracle which it can query to get partial/noisy information about the distance between any pair of points. Our goal is to design sequential learning algorithms which adaptively select oracle queries based on the history of responses, and aim to estimate $x^{\star}$ reliably using as few queries as possible. In particular, we consider the following two query models: $(i)$ each query 
 to the oracle returns the distance between a pair of points along a particular dimension. This query model is motivated by recent work in \cite{bagaria2018adaptive} and \cite{adaptiveKNN} which studied random dimension sampling for reducing the computational complexity of problems including $k$-means and nearest neighbor identification; and $(ii)$ when queried with a pair of points, the oracle returns the true pairwise distance corrupted by some additive noise. A similar oracle model was recently employed by \cite{nn_graph} to study the sequential version of the nearest neighbor graph construction problem.

We make the following contributions towards understanding the query complexity of solving the mode estimation problem using a $k$-NN density estimator. \textit{(a)} For both the query models, we design a sequential learning algorithm which at each timestep adaptively chooses node pairs to query based on previous oracle responses and upon stopping, returns the mode $x^{\star}$ with large probability. Furthermore, we prove \textit{instance-dependent} upper bounds on the query complexity of our proposed scheme which explicitly depend on the underlying dataset $\mathcal{X}$. These bounds indicate that the number of oracle queries required depends on the ``hardness" of the underlying instance and on certain easier instances our  scheme can indeed provide significant savings over naive implementations. Our schemes are based on ideas of confidence intervals from online learning and use them to effectively decide the path of exploration across rounds. \textit{(b)} a key sub-routine in our scheme is \textit{Findk-NN} whose goal is to find the $k$-th NN of any point and for a restricted class of datasets and algorithms, we derive fundamental lower bounds on the query complexity of solving this sub-problem. We find that the expressions derived in the lower bound closely resemble those appearing in the query complexity upper bounds, which further validates the efficiency of our proposed scheme. \textit{(c)} We demonstrate the superior performance of our proposed scheme over other baselines by conducting extensive numerical evaluations over the Tiny Imagenet dataset\footnote{ Tiny ImageNet dataset can be dowloaded from \href{https://tiny-imagenet.herokuapp.com/}{here}.\label{footnote:TIMN}}. 

\subsection{Related work}
The problem of mode estimation of an unknown probability density function has a rich history, see for example \cite{modal_age} and references therein, and has seen renewed interest recently due to applications in high-dimensional clustering and regression \cite{mode_clustering, modal_regression}. However, the bulk of the literature has focused on the batch or the non-sequential setting, whereas the goal in this work is to design adaptive algorithms which are robust to noisy/partial information about the pairwise distances. The sequential mode estimation problem for discrete distributions was addressed recently in \cite{adaptiveDiscrete} under somewhat different oracle models. Modern data science applications often involve intensive computations and there have been several recent works which have proposed the use of randomized adaptive algorithms for speeding up a variety of discrete optimization problems such as $k$-NN \cite{adaptiveKNN,nn_graph}, $k$-means \cite{bagaria2018adaptive}, and medoid computation \cite{medoids}. Finally, adaptive decision making and efficient exploration is a central theme of the well-known multi-armed bandit problem \cite{MAB_book}. Ideas such as confidence intervals and change of measure arguments for deriving lower bounds are widely used in the bandit literature, and find application in our work as well as several others mentioned above. Specifically, mapping each pair of points to an arm and treating their pairwise distance as the expected reward, our problem can be cast as a best arm identification problem but with a more complicated score structure instead of the max expected reward, as is standard in the pure exploration framework of multi-armed bandits.
\section{Problem Formulation}

Let $\mathcal{X}=\left\{\mathbf{x}_{1}, \ldots, \mathbf{x}_{n}\right\} \subset \mathbb{R}^{m}$ denote a set of $n$ points. The points in $\mathcal{X}$ are normalized so that $\| \mathbf{x}_i \|_{\infty} \leq 1 / 2$ for all $i$. For every node $\mathbf{x}_i$, we define a set of its neighbours as $ \mathcal{X}_i \triangleq \mathcal{X} \backslash \mathbf{x}_i$ = $\left\{\mathbf{x}^{i}_{1}, \ldots, \mathbf{x}^{i}_{n-1}\right\}$. Let $[\mathbf{x}_i]_p$ be the value of $\mathbf{x}_i$ in the $p^{th}$ dimension, then the distance measure $d^i_j$ is defined as:
\begin{equation}
    {d}_{j}^i \triangleq 
           \frac{1}{m}\left\|\mathbf{x}_i-\mathbf{x}^{i}_j\right\|_{2}^{2} 
           = \frac{1}{m} \sum_{p = 1}^m \left|\left[\mathbf{x}_{i}\right]_{p}-[\mathbf{x}^i_{j}]_{p}\right|^{2}.
\label{eq:d_i_j defined}
\end{equation}

We arrange the elements of $\mathcal{X}_i$, i.e., $\left\{\mathbf{x}^{i}_{1}, \ldots, \mathbf{x}^{i}_{n-1}\right\}$ such that $\mathbf{x}^{i}_{j}$ is the $j^{th}$ nearest neighbour of $\mathbf{x}_i$. Therefore, by construction, we have $d^i_1 \leq d^i_2 \leq \ldots d^i_{n-1}$ and $\mathbf{x}^i_k$ as the $k^{th}$ nearest neighbour ($k$-NN) of $\mathbf{x}_i$.
 
We wish to find the point with the minimum $k$-NN distance, denoted by $\mathbf{x}_{m_k}$, which represents the mode of the dataset $\mathcal{X}$. The work of \cite{ModeEstimationContinous} proved that $\mathbf{x}_{m_k}$ correctly estimates the mode of unimodal continuous distributions, for certain bounds on $k$. Formally, the value of $m_k$ is given by the following equation, where $[n] \triangleq \{1,\ldots,n\}$.
\begin{equation}
m_k \triangleq \underset{i \in [n]}{\operatorname{arg min}} \ d^i_{k}
\label{eq:mk}
\end{equation}


\noindent We study this problem under the following sampling models:

\noindent \textbf{Sampling Model 1} : In this sampling model, we query an oracle with two points $\mathbf{x}_i$ and $\mathbf{x}^i_j$, and a dimension $p$, and it returns the distance between the points along the chosen dimension. The oracle response on querying with $(\mathbf{x}_i,\mathbf{x}^i_j,p)$ is represented by $\mathcal{O}(\mathbf{x}_i,\mathbf{x}^i_j,p)$ and is given as follows:
$$\mathcal{O}(\mathbf{x}_i,\mathbf{x}^i_j,p)=\left|\left[\mathbf{x}_{i}\right]_{p}-[\mathbf{x}^i_{j}]_{p}\right|^{2}.$$

The manner in which $p$, $\mathbf{x}_i$ and $\mathbf{x}^i_j$ are chosen defines the sampling strategy. In the schemes we design, we will employ a random sampling strategy which chooses $p$ uniformly at random from $\{1,\ldots,m\}$ to get a distance estimate between $\mathbf{x}_i$ and $\mathbf{x}^i_j$.
It has been shown by \cite{bagaria2018adaptive} and \cite{adaptiveKNN} that for high-dimensional datasets, the random sampling strategy can achieve computational gains by reducing query complexity.

As $\mathcal{X}$ has been normalized such that $\| \mathbf{x}_i \|_{\infty} \leq 1 / 2$, the outputs of the oracle are bounded in the interval $[0,1]$. Therefore, the oracle outputs are sub-Gaussian random variables with scale parameter $\sigma = \frac{1}{4}$.

\noindent \textbf{Sampling Model 2} : In this sampling model, upon querying an oracle with two points $\mathbf{x}_i$ and $\mathbf{x}^i_j$, it returns a noisy estimate of the distance between the points. The oracle response $\mathcal{O}(\mathbf{x}_i,\mathbf{x}^i_j)$ is given by:
$$\mathcal{O}(\mathbf{x}_i,\mathbf{x}^i_j)=d^i_j + \eta$$

\noindent where $d^i_j$ is as per \eqref{eq:d_i_j defined} and $\eta$ is a zero mean sub-Gaussian random variable with scale parameter $\sigma \leq \frac{1}{4}$.\\

In this paper we aim to find a $\delta$-true sequential algorithm which correctly estimates the mode of $\mathcal{X}$ as defined in \eqref{eq:mk}, with probability at least $1-\delta$, while minimizing the number of oracle queries. The algorithms that we design and the subsequent theoretical analysis to find the query complexity are valid for both the sampling models. Therefore, all the results that we mention will hold for both the sampling models, unless mentioned otherwise. For sampling model 1, we only consider the random sampling strategy.


\subsection{Preliminaries}

Let $d^i_{j,t}$ denote the sample returned by the oracle at iteration $t$ on querying it with $(\mathbf{x}_i,\mathbf{x}^i_j)$\footnote{For sampling model 1, $p$ is chosen uniformly at random from $\{1,\ldots,m\}$}. We define an unbiased estimator of $d^i_j$ after $T^i[j]$ samples as follows:
$$
   \widehat{d}^i\left[j\right] = 
               \frac{1}{T^i[j]} \sum_{t = 1}^{T^i[j]} d^i_{j,t}.
$$

To calculate the confidence bounds for $\widehat{d}^i[j]$, we use a non-asymptotic version of the law of the iterated logarithm \cite{kaufmann2014complexity}, which is stated in the following lemma.

\begin{lemma}
\label{lem:LIL}
(\cite[Lemma 3]{adaptiveKNN}) The following event occurs with probability at least $1-\delta$ for $\delta \in (0,0.05)$:
\begin{multline}
\mathcal{E}_{\beta}:=\{\left|\widehat{d}^{i}[j]-d^{i}_{j}\right| \leq \beta \left( T^{i}[j] \right), \\ \forall i \in [n], \forall j \in [n-1], \forall \ T^i[j] \geq 1\} 
\label{eq:E_beta}
\end{multline}
where the value of $\beta \left( T^{i}[j] \right)$ is given by $\beta \left( T^{i}[j] \right)=\sqrt{\frac{2 \alpha\left(T^i[j], \delta^{'}  \right)}{T^i[j]}}$, $\delta^{'}=\frac{\delta}{n\times(n-1)}$ and  $\alpha(u,\delta^{'}) = \log(1/\delta^{'}) + 3\log\log(1/\delta^{'}) + 1.5\log(1+\log(u))$. For convenience $\beta \left( T^{i}[j] \right)$ is represented by $\beta^i_j$.
\end{lemma}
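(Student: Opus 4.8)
The plan is to reduce this simultaneous, anytime confidence statement to a single-pair bound and then close the gap with a union bound. For a fixed pair $(\mathbf{x}_i, \mathbf{x}^i_j)$, the samples $d^i_{j,t}$ returned by the oracle are i.i.d.\ with mean $d^i_j$ and, as established in the problem setup, sub-Gaussian with scale parameter $\sigma \le 1/4$ in both sampling models. The quantity $\widehat{d}^i[j]$ is just the running empirical mean of these samples, so the core object is the deviation of an empirical mean of sub-Gaussian samples from its expectation, controlled uniformly over the sample count $T^i[j] \ge 1$.

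First I would invoke the non-asymptotic law of the iterated logarithm of \cite{kaufmann2014complexity} applied to a single arm. This result gives, for a confidence parameter $\delta' \in (0, 0.05)$, an anytime-valid bound: with probability at least $1 - \delta'$, the empirical mean stays within $\sqrt{2\alpha(T, \delta')/T}$ of the true mean simultaneously for all $T \ge 1$, where $\alpha(u, \delta') = \log(1/\delta') + 3\log\log(1/\delta') + 1.5\log(1 + \log(u))$. The $1.5\log(1+\log(u))$ term is precisely the iterated-logarithm correction that upgrades a fixed-time bound to one that holds for all times at once, and the hypothesis $\delta' < 0.05$ is exactly the regime in which the constants of the Kaufmann et al.\ bound are valid; this is the source of the restriction $\delta \in (0, 0.05)$ in the statement.

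Next I would take a union bound over all pairs. Since each point $\mathbf{x}_i$ has $n-1$ neighbours, there are $n(n-1)$ index combinations $(i,j)$ with $i \in [n]$ and $j \in [n-1]$. Setting $\delta' = \delta / (n(n-1))$ and applying the single-arm bound to each pair, the probability that the confidence bound fails for at least one pair is at most $n(n-1)\cdot\delta' = \delta$. Passing to complements yields exactly the event $\mathcal{E}_\beta$ with probability at least $1-\delta$, with $\beta(T^i[j]) = \sqrt{2\alpha(T^i[j], \delta')/T^i[j]}$ as claimed.

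The main obstacle is genuinely contained in the cited single-arm result: establishing the uniform-in-time control that produces the $\log\log$ correction is the analytically hard step, and here it is imported as a black box from \cite{kaufmann2014complexity} / \cite[Lemma 3]{adaptiveKNN}. Given that, the remaining work is bookkeeping that must be done carefully rather than cleverly: confirming sub-Gaussianity of the per-pair samples in both sampling models, counting the $n(n-1)$ combinations so the union bound lands precisely on $\delta$, and checking that the $\delta' < 0.05$ requirement is inherited from the assumed range of $\delta$.
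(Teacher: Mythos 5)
Your argument is correct and matches how this result is established in the cited source: the paper itself imports the lemma directly from \cite[Lemma 3]{adaptiveKNN} without proof, and that lemma is obtained exactly as you describe, by applying the anytime single-arm deviation bound of \cite{kaufmann2014complexity} to each ordered pair and union-bounding over the $n(n-1)$ pairs with $\delta' = \delta/(n(n-1))$. Your observation that sub-Gaussianity of the per-pair samples must be verified separately for each sampling model is also the point the paper emphasizes immediately after stating the lemma.
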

It is important to note that Lemma \ref{lem:LIL} holds true when the expected value of $\widehat{d}^i[j]$ is $d^i_j$ and it is the sum of $T^i[j]$ independent sub-Gaussian random variables with scale parameter $\sigma \leq \frac{1}{4}$. These conditions hold true for our scheme under both the sampling models.
\section{Algorithm}
We propose a $\delta$-true sequential mode-estimation algorithm called AdapativeModeEstimation (Algorithm \ref{algo:AdaptiveModeEstimation}) which is inspired from the Upper Confidence Bound (UCB) algorithm \cite{bestArmSurvey} popular in the multi-armed bandit (MAB) literature \cite{MAB_book}.  We model a pair $(\mathbf{x}_i,\mathbf{x}^i_k)$ as an arm with the expected reward of the arm being $d^i_k$, where $\mathbf{x}^i_k$ is the $k$-NN of $\mathbf{x}_i$ and $d^i_k$ is the distance between them. In this setting, we have $n$ arms and we wish to find the arm with the least expected reward. This is similar to the best arm identification problem which is widely studied in the multi-armed bandit (MAB) literature and therefore the algorithms we design express this nature.

For sampling model 1, we can devise a naive algorithm where the distances between all the pairs are computed exactly and then the mode (as per \eqref{eq:mk}) is selected. This algorithm would require $O(mn^2)$ queries. We aim to use fewer number of queries by using an adaptive algorithm which works on the principle that some points have much larger $k$-NN distance than the actual mode, and these points can be discarded with high confidence without computing the exact $k$-NN distance, thereby reducing the number of oracle queries. 

\subsection{AdaptiveModeEstimation}
To estimate the mode $m_k$ as per \eqref{eq:mk}, we begin by finding the confidence intervals (CIs) for the $k$-NN distance of each point. Let $U^i_{k}$ and $L^i_{k}$ denote a Upper Confidence Bound (UCB) and a Lower Confidence Bound (LCB) respectively, on the $k$-NN distance of $\mathbf{x}_i$. We find $U^i_{k}$ and $L^i_{k}$ for all $i \in \{1,\cdots, n\}$. At every round, the quantities $l_1$ and $l_2$ are calculated as follows:
 
\begin{equation}
    l_{1} = \underset{i \in\{1, \ldots,n\}}{\arg \min } \ L^i_{k}, l_{2} = \underset{i \in\{1, \ldots,n\}\backslash l_{1}}{\arg \min } \ L^i_{k}.
    \label{eq:b1}
\end{equation}

Then we continuously find tighter CI on the $k$-NN distance, of the point with the minimum LCB on the $k$-NN distance, i.e., $l_1$\footnote{In the original UCB algorithm, the goal is to identify the arm with the highest mean reward. Hence, the arm with the highest UCB is pulled to get tighter CI for it. Here, we wish to identify the point with minimum $k$-NN distance that's why we change the sampling strategy to get tighter CI for the point with minimum LCB on the $k$-NN distance.}. This is done by either calling Find$k$-NN (when the $k$-NN has not yet been identified) or by calling Sample ($\mathbf{x}_{l_1}$,$\mathbf{x}^{l_1}_{b^{l_1}}$) (once the $k$-NN ($\mathbf{x}^{l_1}_{b^{l_1}}$) has been identified). The subroutine Sample($\mathbf{x}_i$,$\mathbf{x}^i_j$) is detailed in Algorithm \ref{algo:Sample}.
Algorithm \ref{algo:AdaptiveModeEstimation} terminates when $U^{l_1}_{k}<L^{l_2}_{k}$, and outputs $\mathbf{x}_{l_1}$. 
A quick summary of the variables can be found in Table \ref{tab:Variable Summary}.

\begin{table}[h]
\begin{center}
 \begin{tabular}{|c c|} 
 \hline
 Variable & Description  \\ [0.5ex] 
 \hline\hline
 
 $\mathcal{X}_i$ & $\mathcal{X} \backslash \mathbf{x}_i$ \\ 
 $\left\{\mathbf{x}^{i}_{1}, \ldots, \mathbf{x}^{i}_{n-1}\right\}$  & Elements of $\mathcal{X}_i$  \\
 \hline
  ${d^i_j}$ & True distance between $\mathbf{x}_i$ and $\mathbf{x}^i_j$  \\
 \hline
 ${b^i}$ & Estimated $k$-NN of $\mathbf{x}_i$ \\
 \hline
 $\widehat{d^i}$ & Array of size $n-1$ $\times$ 1 \\
 $\widehat{d^i}[j]$ & Current distance estimate between $\mathbf{x}_i$ and $\mathbf{x}^i_j$ \\
 \hline
 $T^i$ & Array of size $n-1$ $\times$ 1 \\
 $T^i[j]$ & Number of samples corresponding to $\widehat{d^i}[j]$ \\
 \hline
 $U^i$ and $L^i$  & Arrays of size $n-1$ $\times$ 1 \\
 $U^i[j]$ and $L^i[j]$ & UCB and LCB on $\widehat{d^i}[j]$, respectively \\
 \hline
  $U^i_{k}$ and $L^i_{k}$ & UCB and LCB on the $k$-NN distance of $\mathbf{x}_i$ \\
  \hline
  $\beta^i_j$ & $
  \beta^i_j = \beta(T^i[j])$, $\beta(\cdot)$ is defined in Lemma \ref{lem:LIL} \\
  \hline

\end{tabular}
\end{center}
\caption{Summary of variables}
\label{tab:Variable Summary}
\end{table}

\subsection{Find$k$-NN}
The sub-routine Find$k$-NN (Algorithm \ref{algo:AdaptiveKNN}) finds tighter CI on the $k$-NN distance of the point $\mathbf{x}_i \in \mathcal{X}$, until the $k$-NN has been identified. The proposed algorithm is a variant of the method proposed by \cite{adaptiveKNN} in which they find a set of $k + h$ points (from a dataset of $n$ points) containing the $k$ nearest neighbours of an input point, with high probability. Algorithm \ref{algo:AdaptiveKNN} works by adaptively estimating the distance $d^i_k$ as defined in \eqref{eq:d_i_j defined}.  The arrays $\widehat{d}^i$ and $T^i$ are each of size $n-1$ which store the current estimated distance and number of samples for all the neighbours of $\mathbf{x}_i$, and are are provided as input to Algorithm \ref{algo:AdaptiveKNN}. $U^i$ and $L^i$ are two arrays of size $n-1$, such that $U^i[j]$ and $L^i[j]$ are a UCB and an LCB on $\widehat{d}^i[j]$ respectively.
\begin{algorithm}[t]
\caption{AdaptiveModeEstimation}
\begin{algorithmic}
\STATE \textbf{Input parameters :} $k$, $\mathcal{X}$ and $\delta$
\FOR{$i$ in $\{1,\ldots,n\}$}
    \STATE Initialize $\widehat{d}^i$ and $T^i$ as zero vectors
 
    \STATE Call $\text{Find}k\text{-NN}(\widehat{d}^i,T^i,\mathbf{x}_i)$
    \STATE Update $\widehat{d}^i, T^i, U^i_{k},L^i_{k}, b^i$ and $knnfound^i$
\ENDFOR
\STATE Calculate $l_1$ and $l_2$ as per \eqref{eq:b1}
\WHILE{$U^{l_1}_{k} \geq L^{l_2}_{k}$}
\IF{$knnfound^{l_1} = 0$}
        \STATE Call $\text{Find}k\text{-NN}(\widehat{d}^{l_1},T^{l_1},\mathbf{x}_{l_1})$ 
        \STATE Update $\widehat{d}^{l_1}, T^{l_1}, U^{l_1}_{k},L^{l_1}_{k}, b^{l_1}$ and $knnfound^{l_1}$
\ELSE 
\STATE Call Sample $(\mathbf{x}_{l_1},\mathbf{x}^{l_1}_{b^{l_1}})$   
\ENDIF
    \STATE Update $l_1$ and $l_2$
\ENDWHILE
\RETURN $x_{l_1}$
\end{algorithmic}
\label{algo:AdaptiveModeEstimation}
\end{algorithm}
Let $(\cdot)$ be a permutation of $[n-1]$ such that:
\begin{equation}
\widehat{d^i}[(1)] \leq \widehat{d^i}[(2)] \leq \ldots \widehat{d^i}[(n-1)].
\label{eq:() defination}
\end{equation}

At every call of Algorithm \ref{algo:AdaptiveKNN}, at most three pairs $(\mathbf{x}_i,\mathbf{x}^i_{a_1})$, $(\mathbf{x}_i,\mathbf{x}^i_{a_2})$ and $(\mathbf{x}_i,\mathbf{x}^i_{b})$ are sampled, where $a_1$, $a_2$ and $b$ are given by the following equations.
\begin{equation}
     a_{1} = \underset{j \in\{(1), \ldots,(k-1)\}}{\arg \max } U^i[j]
    \label{eq:q1}
\end{equation}
\begin{equation}
    a_{2} = \underset{j \in\{(k+1), \ldots,(n-1)\}}{\arg \min } L^i[j]
    \label{eq:q2}
\end{equation}
\begin{equation}
    b \triangleq (k)
     \label{eq:c1}   
\end{equation}

\begin{algorithm}[t]
\caption{Find$k$-NN}
\begin{algorithmic}
\STATE \textbf{Input parameters :} $\widehat{d}^{i}$, $T^{i}$ and $x^i$
\FOR{$j$ in $\{1,\ldots,n-1\}$}
    \STATE \textbf{if} $T^{i}[j]=0$ : Call Sample $(\mathbf{x}_i,\mathbf{x}^i_j)$
\ENDFOR
\STATE Calculate $a_1$, $a_2$ and $b$ as per \eqref{eq:q1}, \eqref{eq:q2} and \eqref{eq:c1} respectively

\STATE Call Sample $(\mathbf{x}_i, \mathbf{x}^i_{b})$ and set $knnfound$ $\leftarrow$ 1
\STATE \textbf{if} $U^i[a_1] \geq L^i[b]$ : Call Sample $(\mathbf{x}_i,\mathbf{x}^i_{a_1})$, $knnFound$ $\leftarrow$ 0
\STATE \textbf{if} $U^i[b] \geq L^i[a_2]$ : Call Sample $(\mathbf{x}_i,\mathbf{x}^i_{a_2})$, $knnFound$ $\leftarrow$ 0
\STATE Update $U^i_{k}$ and $L^i_{k}$ as per \eqref{eq: Estimated UB}
\RETURN $\widehat{d^i}$, $T^i$, $U^i_{k}$, $L^i_{k}$, $b$, $knnfound$
\end{algorithmic}
\label{algo:AdaptiveKNN}
\end{algorithm}

The pair $(\mathbf{x}_i,\mathbf{x}^i_b)$ is sampled every time Algorithm \ref{algo:AdaptiveKNN} is called. The pair $(\mathbf{x}_i,\mathbf{x}^i_{a_1})$ is sampled only if $U^i[a_1] > L^i[b]$, because once $U^i[a_1] < L^i[b]$, we can say with high confidence that the $k$-NN $\notin$ $\{(1), \ldots,(k-1)\}$. Similarly, the pair $(\mathbf{x}_i,\mathbf{x}^i_{a_2})$ is sampled only if $U^i[b] > L^i[a_2]$.

Let $\langle \cdot \rangle$ and $\{\cdot\}$ \footnote{The permutations $(\cdot)$, $\langle \cdot \rangle$ and $\{ \cdot \}$ are unique for each $\mathbf{x}_i$. Note that all the three permutations do depend on the node $\mathbf{x}_i$ under consideration. We suppress the dependence in the notation for convenience.} be two permutations of $[n-1]$ such that:
\begin{equation}
U^i[\langle 1 \rangle)] \leq U^i[\langle 2 \rangle] \leq \ldots U^i[\langle n-1 \rangle]
\label{eq: U increasing order}
\end{equation}
\begin{equation}
L^i[\{1\}] \leq L^i[\{2\}] \leq \ldots L^i[\{n-1\}].
\label{eq: L increasing order}
\end{equation}
Algorithm \ref{algo:AdaptiveKNN} calculates and returns a UCB and an LCB on the $k$-NN of $\mathbf{x}_i$ as follows\footnote{Correctness of these bounds are demonstrated in Appendix \ref{appendix:theorem bounds KNN}}:
\begin{equation}
    U^i_{k} = U^i[\langle k\rangle], L^i_{k} = L^i[\{k\}].
    \label{eq: Estimated UB}
\end{equation}

\begin{algorithm}[t]
\caption{Sample}
\begin{algorithmic}
\STATE \textbf{Input parameters :} $\mathbf{x}_i$ and $\mathbf{x}^i_j$
\STATE A query is made to the oracle for the pair $\mathbf{x}_i$ and $\mathbf{x}^i_j$ which returns a sample denoted by $d^i_{j,t}$. For sampling model 1, $p$ is chosen uniformly at random from $\{1,\ldots,m\}$
\STATE $T^i[j] \leftarrow T^i[j]+1$ 
\STATE  $\widehat{{d}^{i}}[j] \leftarrow \frac{T^i[j]-1}{T^i[j]} \widehat{d}^{i}[j]+\frac{1}{T^i[j]} d^i_{j,t} $
\STATE $\beta^i_j \leftarrow \beta \left( T^{i}[j] \right) \footnotemark$
\STATE $U^i[j] \leftarrow \widehat{d_i}[j]+\beta^i_j, L^i[j] \leftarrow \widehat{d_i}[j]-\beta^i_j$ 
\end{algorithmic}
\label{algo:Sample}
\end{algorithm}

\footnotetext{$\beta(\cdot)$ is defined in Lemma \ref{lem:LIL} }

Once we have $U^i[a_1]<L^i[b] \cap L^i[a_2]>L^i[b]$, we set the $knnfound$ flag to $1$ and return $\mathbf{x}^i_b$ as the estimated $k$-NN of $\mathbf{x}_i$. We show that the point returned by Algorithm \ref{algo:AdaptiveModeEstimation} is the correct mode as defined in \eqref{eq:mk}, with probability at least $1-\delta$, in Appendix \ref{appendix:theorem bounds KNN}.
\section{Query Complexity of Algorithm \ref{algo:AdaptiveModeEstimation}}
In this section we find an upper bound on the total number of oracle queries required by AdpativeModeEstimation (Algorithm \ref{algo:AdaptiveModeEstimation}) to find the point $\mathbf{x}_{m_k}$ as defined in \eqref{eq:mk}. To do so, we define the quantity $\Delta^i_{m_k}$ as follows:
\begin{equation}
\Delta^i_{m_k} =  d^i_{k}-d^{m_k}_{k}\\
\label{eq:delta_mk}
\end{equation}

\begin{theorem}
\textit{The number of queries taken by Algorithm \ref{algo:AdaptiveModeEstimation} to identify the point $\mathbf{x}_{m_k}$ as defined in \eqref{eq:mk}, for the dataset $\mathcal{X}$, with probability at least $1-\delta$, is at most $\mathcal{N}_{ME}(\mathcal{X})$.}
\begin{equation}
    \mathcal{N}_{ME}(\mathcal{X})= \sum_{i \in [n]} \mathcal{N}_k(\mathbf{x}_i,\mathcal{X}) + \mathcal{N}_{k}^{\star}(\mathbf{x}_{m_k},\mathcal{X})
\label{eq : UB ME}
\end{equation}
\textit{where $\mathcal{N}_k(\mathbf{x}_i,\mathcal{X})$ is an upper bound on the number of queries required by Find$k$-NN to find the $k$-NN of $\mathbf{x}_i$, defined in \eqref{eq: UB AdaptiveKNN}, and $\mathcal{N}_{k}^{\star}(\mathbf{x}_{m_k},\mathcal{X})$ is defined by the following equation. }
$$
    \mathcal{N}_{k}^{\star}(\mathbf{x}_{m_k}, \mathcal{X})=\widetilde{O}\left(\sum_{i \in [n]\backslash m_k} \left(\Delta^{i}_{m_k}\right)^{-2}\right)
$$\\
\textit{where $\Delta^{i}_{m_k}$ is defined in \eqref{eq:delta_mk}. The logarithmic terms in $n$ and the double logarithmic terms in the gaps are absorbed in $\widetilde{O}$.}
\end{theorem}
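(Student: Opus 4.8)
The plan is to condition throughout on the good event $\mathcal{E}_{\beta}$ of Lemma~\ref{lem:LIL}, which holds with probability at least $1-\delta$ and on which \emph{every} confidence interval is simultaneously valid, i.e. $L^i[j]\le d^i_j\le U^i[j]$ for all $i,j$ and all sample counts. On this event the algorithm both returns the correct mode (the bounds \eqref{eq: Estimated UB} being valid as argued in the referenced appendix) and obeys the query bound we want, so the full claim holds with probability $\ge 1-\delta$. The total number of oracle calls is then partitioned into two disjoint groups by where they are issued: (A) calls made from \emph{inside} invocations of Find$k$-NN (the initial \texttt{FOR} loop together with the conditional Find$k$-NN calls of the \texttt{WHILE} loop), and (B) the direct Sample$(\mathbf{x}_{l_1},\mathbf{x}^{l_1}_{b^{l_1}})$ calls issued in the \texttt{WHILE} loop once $l_1$'s $k$-NN has been identified. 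I will show $(\text{A})\le\sum_{i\in[n]}\mathcal{N}_k(\mathbf{x}_i,\mathcal{X})$ and $(\text{B})\le\mathcal{N}^{\star}_k(\mathbf{x}_{m_k},\mathcal{X})$, which together give \eqref{eq : UB ME}.

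For group (A) the key observation is that Find$k$-NN is \emph{resumable}: each invocation for a point $\mathbf{x}_i$ reuses the persistent arrays $\widehat{d}^i$ and $T^i$, so the cumulative cost of all its invocations on $\mathbf{x}_i$ equals the cost of a single run carried on until the flag $knnfound^i$ is set. Since this flag, once set, is never reset for that point, each point's $k$-NN is searched to completion at most once, and the total work devoted to $\mathbf{x}_i$ is therefore at most the Find$k$-NN complexity $\mathcal{N}_k(\mathbf{x}_i,\mathcal{X})$ of \eqref{eq: UB AdaptiveKNN}. Summing over $i\in[n]$ yields the first term.

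For group (B) I use the standard confidence-gap argument of best-arm identification, adapted to the fact that each round pulls only $l_1$, the arm of minimum LCB. Fix a suboptimal arm $i\ne m_k$ whose $k$-NN has already been identified, so that $U^i_k-L^i_k$ is of order $\beta^i_{b^i}$. For $i$ to be selected as $l_1$ we need $L^i_k\le L^{m_k}_k$; on $\mathcal{E}_{\beta}$ we have $L^{m_k}_k\le d^{m_k}_k$ and $L^i_k\ge d^i_k-2\beta^i_{b^i}$, so that
\begin{equation}
2\beta^i_{b^i}\ \ge\ d^i_k-d^{m_k}_k\ =\ \Delta^i_{m_k}.
\label{eq:gap-cap}
\end{equation}
Because $\beta(T)=\sqrt{2\alpha(T,\delta')/T}$ is monotonically decreasing in $T$, inequality \eqref{eq:gap-cap} caps the number of samples on arm $i$ at $T^i[b^i]\le\beta^{-1}(\Delta^i_{m_k}/2)=\widetilde{O}\big((\Delta^i_{m_k})^{-2}\big)$, the double-logarithmic factors in $\alpha$ being absorbed into $\widetilde{O}$. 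Summing these caps over $i\in[n]\setminus m_k$ reproduces exactly $\mathcal{N}^{\star}_k(\mathbf{x}_{m_k},\mathcal{X})$.

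What remains, and what I expect to be the main obstacle, is bounding the direct pulls on the mode $m_k$ itself, since it does not appear in the target sum. Unlike LUCB-style schemes, the algorithm never samples the runner-up $l_2$ directly, so one cannot simply pair each pull of $m_k$ with a shrinking interval of $l_2$. Instead I would track how the minimum-LCB identity rotates: whenever $l_1=m_k$ and the stopping test $U^{m_k}_k<L^{l_2}_k$ fails, the valid intervals force $2\beta^{m_k}_{b^{m_k}}+2\beta^{l_2}_{b^{l_2}}\ge\Delta^{l_2}_{m_k}\ge\min_{j\ne m_k}\Delta^j_{m_k}$, so each such round either tightens $m_k$ (charged to the smallest gap, i.e. the dominant term already present in the sum) or exposes an under-sampled competitor that must itself later become $l_1$ and be charged via \eqref{eq:gap-cap}. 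Making this amortization airtight — guaranteeing that no arm is charged beyond its $\widetilde{O}((\Delta^i_{m_k})^{-2})$ budget across the interleaving, and that the pulls on $m_k$ collapse into $\widetilde{O}\big((\min_{j}\Delta^j_{m_k})^{-2}\big)$, which is subsumed by the sum — is the delicate step; the rest is bookkeeping plus the inversion of $\beta(\cdot)$.
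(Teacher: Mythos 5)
Your decomposition is the same as the paper's: split the queries into those issued inside Find$k$-NN invocations and those issued as direct Sample calls after the $k$-NN of the currently selected point has been identified, bound the first group by $\sum_{i}\mathcal{N}_k(\mathbf{x}_i,\mathcal{X})$ via the resumability of Find$k$-NN, and bound the second group by a best-arm-identification argument. Your treatment of group (A) and of the suboptimal arms in group (B) is correct and matches the paper in substance: the inequality $2\beta^i_{b^i}\geq \Delta^i_{m_k}$ whenever a suboptimal $i$ is selected as $l_1$ is exactly the standard confidence-gap bound, and inverting $\beta(\cdot)$ gives the $\widetilde{O}\big((\Delta^i_{m_k})^{-2}\big)$ cap per arm.

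The gap you flag --- the pulls on $\mathbf{x}_{m_k}$ itself --- is real, and your proposed amortization does not close it as stated. In your case (b), a round with $l_1=m_k$ for which the stopping test fails only because some competitor $l_2$ is under-sampled still spends a query on $m_k$, not on $l_2$; since $l_2$'s sample count does not advance in that round, you cannot charge the round to $l_2$'s $\widetilde{O}\big((\Delta^{l_2}_{m_k})^{-2}\big)$ budget, and the number of such rounds before $L^{m_k}_k$ overtakes $L^{l_2}_k$ is governed by the (possibly tiny) residual $d^{m_k}_k-L^{l_2}_k$ rather than by any gap $\Delta^j_{m_k}$. Completing this step requires the full analysis of the UCB-style sampling rule for pure exploration, which shows that the total number of pulls of the optimal arm is itself absorbed into $\widetilde{O}\big(\sum_{i\neq m_k}(\Delta^i_{m_k})^{-2}\big)$. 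The paper does not reprove this either: its proof of the theorem simply observes that, once every $k$-NN is identified, the remaining problem is a best-arm identification instance with arms $(\mathbf{x}_i,\mathbf{x}^i_k)$ and invokes the known bound from Section 3.B of the cited best-arm survey for $\mathcal{N}^{\star}_k(\mathbf{x}_{m_k},\mathcal{X})$. So your write-up is an honest partial reconstruction of a result the paper imports wholesale; to be acceptable as a self-contained proof you would either have to carry out that UCB analysis (including the optimal-arm pull count) or, as the paper does, cite it.
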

\begin{proof}
The number of queries made by Algorithm \ref{algo:AdaptiveModeEstimation} to estimate $\mathbf{x}_{m_k}$ is at most the number of queries required to estimate the $k$-NN of every point and then finding the point with the smallest $k$-NN distance. This is because, for a point $\mathbf{x}_i$, once its $k$-NN ($\mathbf{x}^i_k$) is identified, only the pair $(\mathbf{x}_i,\mathbf{x}^i_k)$ maybe sampled further by Algorithm \ref{algo:AdaptiveModeEstimation}. The number of queries required by Algorithm \ref{algo:AdaptiveModeEstimation} to find the $k$-NN of all the points in $\mathcal{X}$ is at most $\sum_{i \in [n]} \mathcal{N}_k(\mathbf{x}_i,\mathcal{X})$.


Once the $k$-NN of each $\mathbf{x}_i \in \mathcal{X}$ is identified, the problem boils down to finding the point with the lowest $k$-NN distance, which can be modelled as an MAB problem, where we need to find the arm with the least expected reward. As the sampling strategy in Algorithm \ref{algo:AdaptiveModeEstimation} is similar to that of the UCB algorithm (\cite{bestArmSurvey}), the number of oracle queries required after finding the $k$-NN of all the points is at most $\mathcal{N}_{k}^{\star}(\mathbf{x}_{m_k},\mathcal{X})$ (shown in \cite[Section 3.B]{bestArmSurvey}). Therefore, the number of queries required by Algorithm \ref{algo:AdaptiveModeEstimation} is at most the summation of $\sum_{i \in [n]} \mathcal{N}_k(\mathbf{x}_i,\mathcal{X})$ and $\mathcal{N}_{k}^{\star}(\mathbf{x}_{m_k},\mathcal{X})$, as specified in \eqref{eq : UB ME}.

\end{proof}
\section{Bounds on Algorithm \ref{algo:AdaptiveKNN}}
The Find$k$-NN (Algorithm \ref{algo:AdaptiveKNN}) is called repeatedly until the stopping criteria \eqref{eq:sc1} and \eqref{eq:sc2} are met. 
\begin{equation}
    L^i[b]>U^i[a_1]
    \label{eq:sc1}
\end{equation} 
\begin{equation}
 U^i[b]<L^i[a_2] 
    \label{eq:sc2}
\end{equation}

We begin by deriving an upper bound on the query complexity of Find$k$-NN (Algorithm \ref{algo:AdaptiveKNN}). The following quantity for each $i \in [n]$ will be useful in characterizing our bounds on the query complexity.
\begin{equation}
    \Delta^i_{j} = \begin{cases}
              d^i_{k}-d^i_{j} & j<k\\
              \min(d^i_{k}-d^i_{k-1},d^i_{k+1}-d^i_{k}) & j=k \\
              d^i_{j}-d^i_k & j>k
          \end{cases}
\label{eq:delta}
\end{equation}


\begin{theorem}
\textit{The number of queries taken by Algorithm \ref{algo:AdaptiveKNN} to identify the $k$-NN of a point $\mathbf{x}_i$, from the dataset $\mathcal{X} \backslash \mathbf{x}_i$, with probability at least $1-\delta$, is at most $\mathcal{N}_k(\mathbf{x}_i,\mathcal{X})$.}
\begin{equation}
    \mathcal{N}_k(\mathbf{x}_i, \mathcal{X})=\widetilde{O}\left(\sum_{j=1}^{n-1} \left(\Delta^{i}_{j}\right)^{-2}\right)
\label{eq: UB AdaptiveKNN}
\end{equation}\\
\label{theorem : UB adaptiveknn}
\textit{where $\Delta^{i}_{j}$ is defined in \eqref{eq:delta}.}
\end{theorem}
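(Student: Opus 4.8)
The plan is to condition on the high-probability event $\mathcal{E}_\beta$ of Lemma \ref{lem:LIL} and then give a deterministic, per-neighbour bound on how many queries each neighbour can absorb before the sampling rule permanently excludes it. Since $\mathcal{E}_\beta$ holds with probability at least $1-\delta$ and guarantees $d^i_j \in [L^i[j], U^i[j]]$ for every $j$ at every round, both the correctness (that the returned $\mathbf{x}^i_b$ is the true $k$-NN) and the query bound are argued on this event, and the failure probability $\delta$ is inherited directly from Lemma \ref{lem:LIL}. Throughout I index neighbours by true rank, so that $d^i_1 \le \dots \le d^i_{n-1}$ and $\Delta^i_j$ is as in \eqref{eq:delta}.

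First I would reduce the total count to the number of times neighbours are drawn in the roles $a_1$ and $a_2$. Each call of Algorithm \ref{algo:AdaptiveKNN} always samples $b=(k)$, and samples $a_1$ (resp.\ $a_2$) only when the guard $U^i[a_1] \ge L^i[b]$ (resp.\ $U^i[b] \ge L^i[a_2]$) is active; the routine is invoked again only when at least one guard fired, i.e.\ only when at least one of $a_1,a_2$ was sampled. Hence the number of rounds is at most $1+$ (the number of $a_1$-pulls) $+$ (the number of $a_2$-pulls), and the total query count for $\mathbf{x}_i$ is at most $(n-1)$, the one-time cost of the initialization \textbf{for} loop, plus twice the combined number of $a_1$- and $a_2$-pulls. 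It therefore suffices to bound, for each neighbour $r$, the number of rounds in which it is selected as $a_1$ or $a_2$.

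The key lemma is then: on $\mathcal{E}_\beta$, a neighbour $r\neq k$ can be chosen as $a_1$ or $a_2$ only while its confidence half-width $\beta^i_r$ is still a constant fraction of its gap $\Delta^i_r$. The mechanism is the usual confidence-interval separation: if $r$ has true rank below $k$ and is picked as $a_1$, the guard $U^i[r]\ge L^i[b]$ together with $U^i[r]\le d^i_r+2\beta^i_r$ forces $\beta^i_r$ to be comparable to $d^i_k-d^i_r=\Delta^i_r$; symmetrically for $a_2$ and a neighbour of rank above $k$, using $L^i[r]\ge d^i_r-2\beta^i_r$ and $d^i_r-d^i_k=\Delta^i_r$. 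Given this ``sampled $\Rightarrow$ wide interval'' implication, I invert the confidence width: from $\beta(T)=\sqrt{2\alpha(T,\delta')/T}$ and the fact that $\alpha(T,\delta')$ grows only double-logarithmically in $T$, the condition $\beta(T)\le c\,\Delta^i_r$ is reached after $T=\widetilde{O}\big((\Delta^i_r)^{-2}\big)$ samples, where the $\log(n(n-1)/\delta)$ coming from $\delta'=\delta/(n(n-1))$ and the $\log\log$ factors in the gap are hidden in $\widetilde{O}$. Summing these per-neighbour bounds over $r\in\{1,\dots,n-1\}$, together with the role reduction above, yields $\widetilde{O}\big(\sum_{j=1}^{n-1}(\Delta^i_j)^{-2}\big)$; the true $k$-NN is sampled as $b$ every round, but the number of rounds was already bounded by the $a_1/a_2$ counts, and its own term satisfies $(\Delta^i_k)^{-2}=\max\big((\Delta^i_{k-1})^{-2},(\Delta^i_{k+1})^{-2}\big)$, so it is subsumed by the adjacent ranks $k\pm1$.

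The step I expect to be the main obstacle is making the key lemma fully rigorous despite two sources of coupling. The threshold against which neighbour $r$ is compared is not the fixed value $d^i_k$ but the \emph{estimated} boundary $L^i[b]$ or $U^i[b]$ of the empirical $k$-NN $b=(k)$, whose identity and half-width change from round to round; the clean inequality $2\beta^i_r\gtrsim\Delta^i_r$ really reads $2\beta^i_r+2\beta^i_b\gtrsim\Delta^i_r$, and I must argue that the boundary term $\beta^i_b$ is driven down — which it is, since $b$ is sampled in every round — without double-counting queries. Controlling the drifting empirical permutation $(\cdot)$ is the technical crux: a pigeonhole argument locating, at each round, an arm of true rank $\ge k$ among the empirical bottom $k$ and an arm of true rank $\le k$ among the empirical top $k$ lets me tie $L^i[b]$ and $U^i[b]$ back to $d^i_k$, after which the remaining work is the standard confidence-width inversion.
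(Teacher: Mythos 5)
Your overall skeleton matches the paper's: condition on $\mathcal{E}_\beta$, reduce the query count to a per-neighbour count of how often each arm can be ``actively involved,'' invert the width $\beta(T)\le c\,\Delta^i_j$ to get $\widetilde{O}\bigl((\Delta^i_j)^{-2}\bigr)$ samples per arm, and sum. The accounting reductions you make (rounds bounded by $a_1$/$a_2$ pulls, the $j=k$ term subsumed by $j=k\pm1$) are fine. However, the step you yourself flag as the ``technical crux'' --- decoupling arm $r$'s stopping condition from the drifting empirical boundary arm $b=(k)$ and its width $\beta^i_b$ --- is left as a plan rather than a proof, and it is precisely the step that carries the weight of the theorem. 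Your inequality $2\beta^i_r+2\beta^i_b\gtrsim\Delta^i_r$ does not by itself yield a per-arm sample bound for $r$: although $b$ is sampled every round, the \emph{identity} of $b$ changes with the empirical permutation, so the current $b$ may have few samples and a large $\beta^i_b$, and your proposed pigeonhole over the empirical ranking is not carried out. As written, the argument has a genuine gap here.

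The paper closes exactly this gap with a device you do not use: it fixes the two deterministic midpoints $\mu_1=(d^i_{k-1}+d^i_k)/2$ and $\mu_2=(d^i_k+d^i_{k+1})/2$ and defines, for each arm $j$, a ``bad event'' $\mathcal{E}_{bad}(j)$ (equation \eqref{eq:bad events}) that compares $\widehat{d}^i[j]\pm 3\beta^i_j$ against these \emph{fixed} thresholds --- a condition involving only arm $j$'s own estimate and width, with no $\beta^i_b$ term. Lemma \ref{lem:Upper bound 1} then handles the drifting permutation by a finite case analysis over the possible true ranks of $a_1,a_2,b$: if all three selected arms are ``good,'' then either both stopping criteria \eqref{eq:sc1}--\eqref{eq:sc2} hold or the empirical ordering forces a contradiction with the true ordering (e.g.\ $k$ arms would all satisfy $d^i_j<d^i_k$). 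Lemma \ref{lem:Upper bound 2} shows the bad event for arm $j$ is impossible once $\beta^i_j\le\Delta^i_j/8$; the slack between the $3\beta^i_j$ in the bad event and the $\beta^i_j$ in the confidence bound is what absorbs the cross terms you were trying to control by hand. If you want to complete your version, you should either adopt these fixed reference thresholds or prove your pigeonhole claim in full; without one of these, the per-arm bound $\widetilde{O}\bigl((\Delta^i_r)^{-2}\bigr)$ on the number of times $r$ is selected is not established.
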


\begin{proof}
For a node $\mathbf{x}_i$, $\mathcal{X}_i \triangleq \mathcal{X} \backslash \mathbf{x}_i$. The elements of $\mathcal{X}_i$ are denoted by $\left\{\mathbf{x}^{i}_{1}, \ldots, \mathbf{x}^{i}_{n-1}\right\}$ and $d^i_1 \leq d^i_2 \leq \ldots d^i_{n-1}$. Here, $a_1$, $a_2$ and $b$ are defined as per \eqref{eq:q1}, \eqref{eq:q2} and \eqref{eq:c1} respectively. Let us define $\mu_1$ and $\mu_2$ as follows:
\begin{equation}
    \mu_1 \triangleq \frac{d^i_{k-1}+d^i_k}{2}, \mu_2 \triangleq \frac{d^i_{k}+d^i_{k+1}}{2}.
 \end{equation}

We define the event that a point $\mathbf{x}^i_j$ is bad as:
\begin{equation}
    \mathcal{E}_{bad}(j)
    = \begin{cases}
              \widehat{d^i}[j] + 3\beta^i_j > \mu_1 & j<k\\
              \widehat{d^i}[j] - 3\beta^i_j < \mu_1 \text{ or } 
              \widehat{d^i}[j] + 3\beta^i_j > \mu_2 & j=k\\
              \widehat{d^i}[j] - 3\beta^i_j < \mu_2 & j>k\\
          \end{cases}
\label{eq:bad events}
\end{equation}
where $\beta^i_j$ is defined in Lemma \ref{lem:LIL}.
Now we state two important lemmas (proved in Appendices \ref{appendix:lemma2} and \ref{appendix:lemma3}) which will help us with the proof.
\begin{lemma}
\textit{If $\mathcal{E}_{\beta}$ \eqref{eq:E_beta} occurs and either of the stopping criteria \eqref{eq:sc1} and \eqref{eq:sc2} is not met, then either $\mathcal{E}_{bad}(a_1)$ or $\mathcal{E}_{bad}(a_2)$ or $\mathcal{E}_{bad}(b)$ occurs. }
\label{lem:Upper bound 1}
\end{lemma}
\begin{lemma}
\textit{Let $z^i_j$ be the smallest integer satisfying $\beta(z^i_j) \leq {\Delta^i_j}/{8}$. If the number of times that the pair $(\mathbf{x}_i,\mathbf{x}^i_j)$ has been queried ($T^i[j]$) is atleast $z^i_j$, then $\mathcal{E}_{bad}(j)$ does not occur. $\Delta^i_j$ and $\beta(\cdot)$ are defined in \eqref{eq:delta} and Lemma \ref{lem:LIL} respectively.}
\label{lem:Upper bound 2}
\end{lemma}
An upper bound on the number of queries is derived by counting all the iterations when the stopping criteria is not met. 
This corresponds to the iterations when either $\mathcal{E}_{bad}(a_1)$ or $\mathcal{E}_{bad}(a_2)$ or $\mathcal{E}_{bad}(b)$ occurs (as per Lemma \ref{lem:Upper bound 1}). 
This further corresponds to the iterations where $T^i[j]$ does not exceed $z^i_j$ for $j \in \{a_1,a_2,b\}$ (as per Lemma \ref{lem:Upper bound 2}). 
Let $t \geq 1$ be the $t^{th}$ iteration of Algorithm \ref{algo:AdaptiveKNN}.
In each iteration, as only $(\mathbf{x}_i, \mathbf{x}^i_{a_1})$, $(\mathbf{x}_i, \mathbf{x}^i_{a_2})$ and $(\mathbf{x}_i, \mathbf{x}^i_{b})$ might be sampled, it follows from Lemma \ref{lem:Upper bound 1} that the total number of queries is bounded by the following quantity (with probability at least $1 - \delta$).
%
\begin{equation}
\begin{split}
    & 3\sum_{t=1}^{\infty} \mathbf{1}\left(\mathcal{E}_{bad}(a_1) \cup \mathcal{E}_{bad}(a_2) \cup \mathcal{E}_{bad}(b) \right) \\
    & \leq 3\sum_{t=1}^{\infty} \sum_{j=1}^{n-1} \mathbf{1} \left( (j=a_1 \cup j=a_2 \cup j=b) \cap \mathcal{E}_{bad}(j) \right) \\
    & \overset{\textit{(i)}}\leq 3\sum_{t=1}^{\infty} \sum_{j=1}^{n-1} \mathbf{1} \left( (j=a_1 \cup j=a_2 \cup j=b) \cap  (T^i[j] \leq z^i_j) \right) \\
    & \overset{\textit{(ii)}}\leq 3\sum_{j=1}^{n-1}z^i_j 
\end{split}
\label{eq:Upper bound inequality}
\end{equation}
%
where \textit{(i)} follows from Lemma \ref{lem:Upper bound 2} and  \textit{(ii)} follows as $T^i[j] \leq z^i_j$ can be true for at most $z^i_j$ iterations. From (\cite[Fact 6]{adaptiveKNN}), $z^i_j \leq \widetilde{O} \left(\left(  \Delta^{i}_{j}\right)^{-2} \right)$, thus proving the theorem.
\end{proof}
We also derive a lower bound on the sample complexity of any algorithm that identifies the $k$-NN of a point, in Appendix \ref{appendix:lower bound on QC}, which is of the same order as the upper bound found in Theorem \ref{theorem : UB adaptiveknn}. 


\section{Experiments}
In this section we evaluate the performance of our proposed scheme on the Tiny ImageNet\footref{footnote:TIMN} dataset, that contains images of dimensions $64\times64\times3$ from 200 different classes. For each trial, we select the dataset by selecting $n$ points at random (without replacement). For sampling model 1, we make a slight modification in our algorithm. We cap the number of queries for any given pair ($\mathbf{x}_i$, $\mathbf{x}^i_j$) to be $m$ and once this limit is reached, evaluate $d^i_j$ exactly by simply querying the dimensions one by one. This is similar to the procedure in \cite{bagaria2018adaptive} and \cite{adaptiveKNN}, and ensures that the worst case query complexity is $O(mn^2)$. In sampling model 2, the oracle response to a query is the true distance corrupted by Gaussian noise with $\sigma = 0.1$. We keep $\delta=0.001$ for all the experiments.

To analyse the accuracy of our algorithm in estimating the mode for a fixed budget of queries, we run our algorithm till the budget is exhausted and then output the point with smallest $k$-NN distance estimate as the mode. We compare the performance of our algorithm with two baselines : \textit{Naive+} and \textit{Random Sampling}. For both the baselines, the query budget is divided equally among all the points. In \textit{Naive+}, for each point, Find$k$-NN is called repeatedly until the individual query budget is met. In \textit{Random Sampling}, for each point, we repeatedly select one of its neighbours uniformly at random, sample the corresponding pair, and update the distance estimate, until the individual query budget is met. For both the baselines, we select the point with the smallest $k$-NN distance estimate as the estimated mode.

\begin{figure}[h]
\centering
\includegraphics[width=0.75\linewidth]{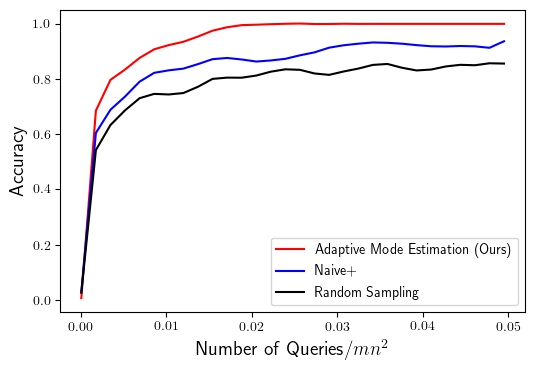}
\caption{ Accuracy vs Number of queries: Sampling model 1 }
\label{fig: NumQueriesAcc1}
\end{figure}

Figure \ref{fig: NumQueriesAcc1} plots the Accuracy vs. Number of Queries curve for our algorithm and the two baselines for sampling model 1. This experiment is performed for $n=100$ and $k=10$ for 200 random trials. As we can see from this figure, our algorithm comprehensively outperforms the two baselines. 
The results for sampling model 2 are  similar and are provided in Appendix \ref{appendix:Additional_Experiments} along with the detailed 
experimental setup and other experimental results.

\bibliographystyle{ieeetr}
\bibliography{bibliography}

\begin{thebibliography}{10}

\bibitem{modal_age}
J.~E. Chac{\'o}n, ``The modal age of statistics,'' in {\em International
  Statistical Review}, vol.~88, pp.~122--141, 2020.

\bibitem{ModeEstimationContinous}
S.~Dasgupta and S.~Kpotufe, ``Optimal rates for k-nn density and mode
  estimation,'' in {\em Advances in Neural Information Processing Systems},
  pp.~2555--2563, 2014.

\bibitem{bagaria2018adaptive}
V.~K. Bagaria, G.~M. Kamath, and D.~N.~C. Tse, ``Adaptive monte-carlo
  optimization,'' {\em arXiv preprint arXiv:1805.08321}, 2018.

\bibitem{adaptiveKNN}
D.~LeJeune, R.~Heckel, and R.~Baraniuk, ``Adaptive estimation for approximate $
  k $-nearest-neighbor computations,'' in {\em International Conference on
  Artificial Intelligence and Statistics}, pp.~3099--3107, 2019.

\bibitem{nn_graph}
B.~Mason, A.~Tripathy, and R.~Nowak, ``Learning nearest neighbor graphs from
  noisy distance samples,'' in {\em Advances in Neural Information Processing
  Systems}, pp.~9586--9596, 2019.

\bibitem{mode_clustering}
Y.-C. Chen, C.~R. Genovese, L.~Wasserman, {\em et~al.}, ``A comprehensive
  approach to mode clustering,'' in {\em Electronic Journal of Statistics},
  vol.~10, pp.~210--241, 2016.

\bibitem{modal_regression}
Y.-C. Chen, C.~R. Genovese, R.~J. Tibshirani, L.~Wasserman, {\em et~al.},
  ``Nonparametric modal regression,'' in {\em The Anals of Statistics},
  vol.~44, pp.~489--514, 2016.

\bibitem{adaptiveDiscrete}
D.~Shah, T.~Choudhury, N.~Karamchandani, and A.~Gopalan, ``Sequential mode
  estimation with oracle queries,'' {\em arXiv preprint arXiv:1911.08197},
  2019.

\bibitem{medoids}
V.~Bagaria, G.~Kamath, V.~Ntranos, M.~Zhang, and D.~Tse, ``Medoids in
  almost-linear time via multi-armed bandits,'' in {\em International
  Conference on Artificial Intelligence and Statistics}, pp.~500--509, PMLR,
  2018.

\bibitem{MAB_book}
T.~Lattimore and C.~Szepesv{\'a}ri, {\em Bandit algorithms}.
\newblock Cambridge University Press, 2020.

\bibitem{kaufmann2014complexity}
E.~Kaufmann, O.~Capp{\'e}, and A.~Garivier, ``On the complexity of best-arm
  identification in multi-armed bandit models,'' in {\em The Journal of Machine
  Learning Research}, vol.~17, pp.~1--42, 2016.

\bibitem{bestArmSurvey}
K.~{Jamieson} and R.~{Nowak}, ``Best-arm identification algorithms for
  multi-armed bandits in the fixed confidence setting,'' in {\em 2014 48th
  Annual Conference on Information Sciences and Systems (CISS)}, pp.~1--6,
  2014.

\bibitem{KL_gaussian}
D.~Johnson and S.~Sinanovic, ``Symmetrizing the kullback-leibler distance,'' in
  {\em IEEE Transactions on Information Theory}, 2001.

\end{thebibliography}


\newif\ifuseRomanappendices
\appendices
\section{Correctness of Algorithm \ref{algo:AdaptiveModeEstimation}}
\label{appendix:theorem bounds KNN}
In this section, we prove that the point returned by Algorithm \ref{algo:AdaptiveModeEstimation} is the one given by \eqref{eq:mk}. To do this, we assume that the event $\mathcal{E}_{\beta}$, which occurs with probability at least $1-\delta$, holds throughout our algorithm.

We know that a UCB and an LCB of the estimated distance between $\mathbf{x}^i$ and $\mathbf{x}^i_j$ are $U^i[j]=\widehat{d}^i[j]+\beta^i_j$ and $L^i[j]=\widehat{d}^i[j]-\beta^i_j$ respectively.
Since $\mathcal{E}_{\beta}$ occurs, it follows from Lemma \ref{lem:LIL} that
\begin{equation}
    L^i[j] \leq d^i_j \leq U^i[j], \forall \ i \in [n] \ \text{and} \ \forall j \in [n-1].
\label{eq: LCB and UCB condition}
\end{equation}

Now we state the following lemma which will help us prove that \eqref{eq: Estimated UB} is a correct CI for $d^i_k$. 

\begin{lemma}
\textit{At most $k-1$ values of $j$ exist for $j \in [n-1]$ such that $d^i_j < L^i[\{k\}]$.}
\label{lem:LCB_k}
\end{lemma}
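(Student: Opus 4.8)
The plan is to argue by a simple containment-plus-counting argument, working throughout under the event $\mathcal{E}_\beta$, which is assumed to hold in this appendix. The crucial input is inequality \eqref{eq: LCB and UCB condition}, namely $L^i[j] \leq d^i_j$ for every $j \in [n-1]$, valid on $\mathcal{E}_\beta$. The idea is that any neighbour whose \emph{true} distance falls below the threshold $L^i[\{k\}]$ must already have its lower confidence bound below that threshold, and only few lower bounds can be that small by the very definition of $\{\cdot\}$ as a sorted order.

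First I would introduce the index set $S \triangleq \{j \in [n-1] : d^i_j < L^i[\{k\}]\}$, whose cardinality is exactly what the lemma bounds. For any $j \in S$, the lower-confidence-bound guarantee gives $L^i[j] \leq d^i_j < L^i[\{k\}]$, hence $L^i[j] < L^i[\{k\}]$. This shows the containment $S \subseteq \{j \in [n-1] : L^i[j] < L^i[\{k\}]\}$, so it suffices to bound the size of the latter set.

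Next I would count the indices $j$ with $L^i[j] < L^i[\{k\}]$ directly from the defining ordering \eqref{eq: L increasing order} of the permutation $\{\cdot\}$. Since $L^i[\{k\}]$ occupies the $k$-th position of the non-decreasing sequence $L^i[\{1\}] \leq \cdots \leq L^i[\{n-1\}]$, every value in positions $k, k+1, \ldots, n-1$ is at least $L^i[\{k\}]$, so only positions $1, \ldots, k-1$ can carry a value strictly below $L^i[\{k\}]$. Thus at most $k-1$ indices satisfy $L^i[j] < L^i[\{k\}]$, which combined with the containment above yields $|S| \leq k-1$, proving the lemma.

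I do not anticipate a genuine obstacle here: the statement is a deterministic consequence of the sorted structure of the lower confidence bounds and the one-sided validity $L^i[j] \leq d^i_j$ on $\mathcal{E}_\beta$. The only point needing mild care is the strict-versus-nonstrict bookkeeping under ties among the $L^i[j]$ values — one must count values \emph{strictly} less than $L^i[\{k\}]$ rather than merely those in sorted positions before $k$ — but since every position past $k-1$ carries a value $\geq L^i[\{k\}]$, the bound $k-1$ holds regardless of how ties are broken.
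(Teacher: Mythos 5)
Your proof is correct and follows essentially the same route as the paper's: both rest on the containment $\{j : d^i_j < L^i[\{k\}]\} \subseteq \{j : L^i[j] < L^i[\{k\}]\}$ via the one-sided bound $L^i[j] \leq d^i_j$ on $\mathcal{E}_\beta$, followed by counting positions in the sorted order \eqref{eq: L increasing order}; the paper merely packages this as a contradiction while you argue directly. Your explicit handling of ties (``at most $k-1$'' rather than the paper's ``exactly $k-1$'') is in fact slightly more careful than the original.
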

\begin{proof}
We prove Lemma \ref{lem:LCB_k} by contradiction. Let us assume there exists a set of $k-1+l$ points for $l\geq 1$, represented by $\{p_1,p_2,\ldots,p_{k-1+l}\}$, such that:
$$d^i_{p_j}<L^i[\{k\}], \forall j \in [k-1+l].$$

Since $\mathcal{E}_{\beta}$ occurs, it follows from \eqref{eq: LCB and UCB condition} that $L^i[j] \leq d^i_j$. Hence, we have:
$$L^i[p_j]\leq d^i_{p_j} < L^i[\{k\}], \forall j \in [k-1+l].$$

This implies that atleast $k-1+l$ values of $j$ exist such that $L^i[j] < L^i[\{k\}]$. However, from the definition of the permutation $\{\cdot\}$  in \eqref{eq: L increasing order}, we know that exactly $k-1$ such values of $j$ exist. 
Since we assumed $l \geq 1$, we have a contradiction. 
\end{proof}

Using Lemma \ref{lem:LCB_k}, we can easily prove the following theorem which shows that \eqref{eq: Estimated UB} is a correct CI for $d^i_k$. 
\begin{theorem}
\label{theorem:bounds on KNN}
\textit{If the event $\mathcal{E}_\beta$ occurs, then the $k$-NN distance of $\mathbf{x}_i$, denoted by $d^i_k$, is bounded as follows:
$$
    L^i[\{k\}] \leq d^i_k \leq U^i[\langle k \rangle], \forall i \in [n], \forall k \in [n-1]
$$
where the definitions of $\langle\cdot\rangle$ and $\{\cdot\}$ are as per \eqref{eq: U increasing order} and \eqref{eq: L increasing order} respectively.}\\
\end{theorem}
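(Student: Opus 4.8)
The plan is to prove the two inequalities $L^i[\{k\}] \leq d^i_k$ and $d^i_k \leq U^i[\langle k\rangle]$ separately, each by a short counting argument that combines the sorted order $d^i_1 \leq d^i_2 \leq \cdots \leq d^i_{n-1}$ with the containment $L^i[j] \leq d^i_j \leq U^i[j]$ guaranteed by \eqref{eq: LCB and UCB condition} under $\mathcal{E}_\beta$. Fix an arbitrary $i \in [n]$ and $k \in [n-1]$ throughout.

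For the lower bound, I would argue by contradiction, invoking Lemma \ref{lem:LCB_k} directly. Suppose $d^i_k < L^i[\{k\}]$. Because the true distances are sorted, $d^i_1 \leq d^i_2 \leq \cdots \leq d^i_k < L^i[\{k\}]$, so the $k$ indices $j \in \{1,\dots,k\}$ all satisfy $d^i_j < L^i[\{k\}]$. This contradicts Lemma \ref{lem:LCB_k}, which states that at most $k-1$ indices can satisfy this strict inequality. Hence $d^i_k \geq L^i[\{k\}]$.

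For the upper bound, I would run the symmetric counting argument on the upper confidence bounds. By the definition of the permutation $\langle\cdot\rangle$ in \eqref{eq: U increasing order}, the indices $\langle 1\rangle,\dots,\langle k\rangle$ furnish $k$ distinct indices whose UCB is at most $U^i[\langle k\rangle]$. Under $\mathcal{E}_\beta$, \eqref{eq: LCB and UCB condition} gives $d^i_j \leq U^i[j] \leq U^i[\langle k\rangle]$ for each such $j$, so at least $k$ of the true distances are bounded above by $U^i[\langle k\rangle]$. Since $d^i_1 \leq \cdots \leq d^i_{n-1}$ is sorted, having at least $k$ distances no larger than $U^i[\langle k\rangle]$ forces the $k$-th smallest, namely $d^i_k$, to satisfy $d^i_k \leq U^i[\langle k\rangle]$; otherwise at most $k-1$ distances could lie at or below this threshold, a contradiction. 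Combining the two bounds yields $L^i[\{k\}] \leq d^i_k \leq U^i[\langle k\rangle]$, and since $i$ and $k$ were arbitrary the claim holds for all $i \in [n]$ and all $k \in [n-1]$.

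I expect no serious obstacle here: the entire argument is elementary order-statistics bookkeeping once \eqref{eq: LCB and UCB condition} and Lemma \ref{lem:LCB_k} are in hand. The only point demanding care is the consistent handling of strict versus non-strict inequalities; Lemma \ref{lem:LCB_k} is phrased with a strict ``$<$'', so the counting for the lower bound must track that strictness, whereas the upper bound uses the non-strict ``$\leq$'' inherited from \eqref{eq: LCB and UCB condition}, and potential ties among the $d^i_j$ or among the confidence bounds do not affect the counts. One could alternatively state and prove an explicit dual of Lemma \ref{lem:LCB_k}, namely that at most $n-1-k$ indices satisfy $d^i_j > U^i[\langle k\rangle]$, to render the upper bound perfectly symmetric, but the direct count above is shorter.
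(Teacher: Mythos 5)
Your proposal is correct and follows essentially the same route as the paper: the lower bound is exactly the ``immediate'' consequence of Lemma \ref{lem:LCB_k} that the paper invokes, and your direct count over $\langle 1\rangle,\dots,\langle k\rangle$ is precisely the ``similar argument'' the paper leaves implicit for the upper bound. You simply spell out both halves more explicitly than the paper does, which is fine.
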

\begin{proof}

Lemma \ref{lem:LCB_k} immediately implies that $L^i[\{k\}] \leq d^i_k$. Using similar arguments, we can also show that $d^i_k \leq U^i[\langle k\rangle]$, thus proving the theorem. 
\end{proof}
We now show that when the event $\mathcal{E}_\beta$ occurs, the value returned by Algorithm \ref{algo:AdaptiveModeEstimation}, i.e. $\mathbf{x}_{l_1}$, is same as $\mathbf{x}_{m_k}$ defined in \eqref{eq:mk}. When the algorithm terminates, we have the following.
\begin{equation}
    U^{l_1}[\langle k \rangle] = U^{l_1}_k < L^{l_2}_k = L^{l_2}[\{k\}]
    \label{eq:algo_terminates}
\end{equation}
where $l_1$ and $l_2$ are defined in \eqref{eq:b1}.
From Theorem \ref{theorem:bounds on KNN}, we have the following two equations:
\begin{equation}
    L^{l_2}[\{k\}] \leq d^i_k, \forall i \in [n]\backslash l_1
    \label{eq:b2_lcb}
\end{equation}
\begin{equation}
    d^{l_1}_k \leq U^{l_1}[\langle k \rangle].
    \label{eq:b1_ucb}
\end{equation}
Plugging in \eqref{eq:b2_lcb} and \eqref{eq:b1_ucb} in \eqref{eq:algo_terminates}, we get the following result.
$$d^{l_1}_{k}\leq U^{l_1}[\langle k \rangle]<L^{l_2}[\{k\}]\leq d^i_k, \forall i \in [n-1]\backslash l_1$$
$$\implies d^{l_1}_{k} < d^i_k , \forall i \in [n]\backslash l_1$$

Therefore, when the event $\mathcal{E}_\beta$ occurs, we have $l_1=m_k$. Hence, Algorithm \ref{algo:AdaptiveModeEstimation} provides the correct result with probability atleast $1-\delta$.
\section{Proof of lemma  \ref{lem:Upper bound 1}}
\label{appendix:lemma2}

Let us define $\mathcal{E}_{good}(j)$ as the complement of $\mathcal{E}_{bad}(j)$ defined in \eqref{eq:bad events}. 

Let $\widehat{S}_{near}=\{(1),(2),\ldots,(k-1)\}$ and $\widehat{S}_{far}=\{(k+1),(k+2),\ldots,(n-1)\}$, where $(\cdot)$ is a permutation of $[n-1]$ as per \eqref{eq:() defination}.

We prove Lemma \ref{lem:Upper bound 1} by proving that occurrence of $\mathcal{E}_{good}(j)$ $\forall {j \in \{a_1, a_2, b\}}$ implies that both the stopping criteria \eqref{eq:sc1} and \eqref{eq:sc2} are met. Assuming $\mathcal{E}_{good}(j)$ occurs $\forall {j \in \{a_1, a_2, b\}}$, we first prove that, for all possible values of $b$ and $a_1$, either the stopping criterion \eqref{eq:sc1} is met or there is a contradiction. We assume that the event $\mathcal{E}_{\beta}$, which occurs with probability at least $1-\delta$, holds throughout our algorithm.
\begin{enumerate}[i.]
    \item \textbf{Case 1}: \{$a_1\geq k$, $b\leq k-1$\} \text{OR} \{$a_1\geq k+1$, $b= k$\}:\\
        $\mathcal{E}_{good}(a_1)$ and $\mathcal{E}_{good}(b)$ implies that $\widehat{d^i}[a_1] - 3\beta^i_{a_1} \geq \widehat{d^i}[b] + 3\beta^i_{b}$, which further implies that $\widehat{d^i}[a_1] > \widehat{d^i}[b]$ (as $\beta^i_j>0$ for all possible values of $j$). This is a contradiction as, by definition, $a_1 \in \widehat{S}_{near}$ and $b=(k)$, implying $\widehat{d^i}[a_1] < \widehat{d^i}[b]$. Hence, this case is not valid by definition.
        
     \item \textbf{Case 2}: \{$a_1\leq k-1$, $b= k$\} \text{OR}  \{$a_1\leq k$, $b\geq k+1$\}:\\
         $\mathcal{E}_{good}(a_1)$ and $\mathcal{E}_{good}(b)$ implies that $\widehat{d^i}[b] - 3\beta^i_{b} \geq \widehat{d^i}[a_1] + 3\beta^i_{a_1}$, which further implies that $\widehat{d^i}[b] - \beta^i_{b} > \widehat{d^i}[a_1] + \beta^i_{a_1}$. Therefore, $L^i[b]>U^i[a_1]$ and the stopping criterion \eqref{eq:sc1} is met.

    \item \textbf{Case 3}: \{$a_1\leq k-1$, $b\leq k-1$\}:\\
        $\mathcal{E}_{good}(a_1)$ implies that $\widehat{d^i}[a_1] + 3\beta^i_{a_1} \leq \mu_1$, which further implies that $\widehat{d^i}[a_1] + \beta^i_{a_1} \leq \mu_1$. Therefore, by definition of $a_1$ (point with the highest UCB in $\widehat{S}_{near}$) and $\mathcal{E}_{good}(b)$, we have $\widehat{d^i}[j] + \beta^i_{j} < \mu_1$ for all $j \in \widehat{S}_{near} \cup \{b\}$. Since $\mathcal{E}_\beta$ occurs, we have $d^i_j<\widehat{d}_i[j]+\beta^i_j$. This, coupled with the fact that $\mu_1 = \frac{d^i_{k-1}+d^i_k}{2} \leq d^i_k$ gives us $d^i_j<d^i_k$ $\forall$ $j \in \widehat{S}_{near} \cup \{b\}$. This is a contradiction as this implies $d^i_j<d^i_k$ holds for $k$ points, but by definition of $d^i_k$, it can only hold true for $k-1$ points. Hence, this case is not valid by definition.
    
    \item \textbf{Case 4}: \{$a_1\geq k+1$, $b\geq k+1$\}:\\
         \textit{Case} $4(i)$: \{$a_2\leq k$\}:\\
        We can arrive at a contradiction using similar arguments as that of \textit{Case 1}. Hence, this case is not valid by definition.

         \textit{Case} $4(ii)$: \{$a_2\geq k+1$\}:\\
        We can arrive at a contradiction using similar arguments as that of \textit{Case 3}. Hence, this case is not valid by definition.
\end{enumerate}

Assuming $\mathcal{E}_{good}(j)$ occurs $\forall {j \in \{a_1, a_2, b\}}$, we can use similar arguments to prove that, for all possible values of $b$ and $a_2$, either the stopping criterion \eqref{eq:sc2} is met or there is a contradiction. Hence, for all possible values of $b$, $a_1$ and $a_2$, either there is a contradiction, or both the stopping criteria \eqref{eq:sc1} and \eqref{eq:sc2} are met. Hence, the result is proved.
\section{Proof of lemma  \ref{lem:Upper bound 2}}
\label{appendix:lemma3}
Let us take the case when $j < k$. Given $\beta^i_j \leq {\Delta^i_j}/{8}$, 
 \begin{equation*}
     \begin{split}
          \widehat{d}^i[j] + 3\beta^i_j & \overset{\textit{(i)}} \leq {d}^i_j + 4\beta^i_j \\
         &  \leq {d}^i_j + \frac{\Delta^i_j}{2} \\
         & \overset{\textit{(ii)}} \leq \frac{{d}^i_k + {d}^i_{k-1}}{2} \\
         & = \mu_1
     \end{split}
 \end{equation*}
where inequality $(i)$ follows from the assumption that $\mathcal{E}_{\beta}$ occurs and inequality $(ii)$ follows from the definition of $\Delta^i_j$. Hence, $\mathcal{E}_{bad}(j)$ does not occur for $j<k$.
For $j>k$, the proof is similar.\\
\\
For $j = k$, 
 \begin{equation*}
     \begin{split}
          \widehat{d}^i[k] + 3\beta^i_k  & \overset{\textit{(i)}} \leq {d}^i_k + 4\beta^i_k \\
         & \leq {d}^i_k + \frac{\Delta^i_k}{2} \\
         & \overset{\textit{(ii)}} \leq \frac{{d}^i_{k+1} + {d}^i_{k}}{2} \\
         & = \mu_2
     \end{split}
 \end{equation*}
\hspace{4cm}\text{AND}
  \begin{equation*}
     \begin{split}
          \widehat{d}^i[k] - 3\beta^i_k & \overset{\textit{(iii)}} \geq {d}^i_k - 4\beta^i_k \\
         & \geq {d}^i_k - \frac{\Delta^i_k}{2} \\
         & \overset{\textit{(iv)}} \geq \frac{{d}^i_k + {d}^i_{k-1}}{2} \\
         & = \mu_1
     \end{split}
 \end{equation*}
 where inequalities $(i)$ and $(iii)$ follow from the assumption that $\mathcal{E}_{\beta}$ occurs and
  inequalities $(ii)$ and $(iv)$ follow from the fact that $\Delta^i_k = \min(d^i_{k}-d^i_{k-1},d^i_{k+1}-d^i_{k})$. Hence, $\mathcal{E}_{bad}(j)$ does not occur for $j=k$ either, thus proving the result.
  
\section{Lower Bound on Query Complexity of Algorithm \ref{algo:AdaptiveKNN}}
\label{appendix:lower bound on QC}

In this section, we find a lower bound on the number of queries of all the active algorithms which estimate the $k$-NN distance of a point. We do this by mapping the problem to a multi-armed bandit (MAB) setting where, for a reference point $\mathbf{x}_i$, we have $n-1$ arms, one corresponding to each of the other points. The expected reward of an arm in this mapping is the distance between the reference point and the corresponding neighbour. In this setting, finding the $k$-NN corresponds to finding the arm with $k^{th}$ smallest expected reward. This can be seen as a variant of the widely studied pure exploration problem in the MAB setting \cite{bestArmSurvey} where the goal is to identify the arm with the highest expected reward. We make the following assumptions on the dataset, the oracle and the algorithms which interact with it, to ensure that the lower bound is well defined.
\begin{itemize}
    \item \textbf{Sampling Model 1} : Similar to lower bound argument in \cite{adaptiveKNN}, for this sampling model we will restrict our attention to datasets where $[\mathbf{x}_i]_p \in \{-\frac{1}{2},\frac{1}{2}\}$ $\forall i \in [n]$ and $\forall p \in [m]$, and algorithms which only interact with the dataset at any time by accessing the distance between a pair of points across a randomly chosen dimension. Thus, the oracle response for any pair $(\mathbf{x}_i,\mathbf{x}^i_j)$ has a Bernoulli distribution with mean $d^i_j$.
    \item \textbf{Sampling Model 2} : For this sampling model, we assume that oracle response for a pair $(\mathbf{x}_i,\mathbf{x}^i_j)$ has a Gaussian distribution with mean $d^i_j$ and variance $\sigma^2$.
\end{itemize}

Next, we introduce some notation from the bandit literature \cite{MAB_book} to help us find the lower bound. Let $\nu=\left(\nu_{1}, \ldots, \nu_{n-1}\right)$ be a set of $n-1$ probability distributions which represents an MAB instance with $\mathbb{E}[\nu_1]<\mathbb{E}[\nu_2]<\ldots<\mathbb{E}[\nu_{n-1}]$. Under our assumptions for sampling model 1, $\nu_j$ has a Bernoulli distribution with mean $d^i_j$, and for sampling model 2, $\nu_j$ has a Gaussian distribution with mean $d^i_j$ and variance $\sigma^2$ for all $j\in [n-1]$. Let $\mathcal{A}$ be an active algorithm\footnote{\label{footnote:RS} For Sampling Model 1, we only consider the algorithms which follow the random sampling strategy.} which identifies the arm with the $k^{th}$ smallest reward ($\nu_k$), with probability at least $1-\delta$. The algorithm selects an arm $i_t$ at time instant $t$ and gets the reward $X_t$ from the distribution $\nu_{i_t}$. The arm selected by the algorithm at a time instant $t$ is based only on past arm selections and rewards. Let $\mathcal{F}_t$ be the $\sigma$-algebra generated from $i_1,X_1,\ldots,i_t,X_t$. The algorithm terminates when some stopping rule $\xi$ is satisfied. Let $\tau$ denote the stopping time based on the stopping rule $\xi$. We assume that $P\left[\tau<\infty\right]=1$. 

\begin{theorem}
\label{theorem:Lower Bound on query complexity}
\textit{Let $\nu=\left(\nu_{1}, \ldots, \nu_{n-1}\right)$ be an MAB instance such that $\mathbb{E}[\nu_1]<\mathbb{E}[\nu_2]<\ldots<\mathbb{E}[\nu_{n-1}]$ and $\mathcal{A}$ be an algorithm\footref{footnote:RS} that identifies the arm with the $k^{th}$ smallest expected reward ($\nu_k$), with probability at least $1-\delta$. Let $\tau$ represent the number of samples the algorithm requires to terminate. Then for all $\delta \in (0,0.15]$ and for all such algorithms the following condition is true.}
\begin{multline*}
\hspace*{-0.5cm} \mathbb{E}_{\nu}[\tau] \geq \log \left(\frac{1}{2.4 \delta}\right)\left[\sum_{a \in [n-1]\backslash k} \frac{1}{\mathrm{KL}\left(\nu_{a}, \nu_{k}\right)}+\frac{1}{\mathrm{KL}\left(\nu_{k}, \nu_{k^{*}}\right)} \right] 
\end{multline*}
\textit{where $k^{*}=\underset{a \in\{k-1,k+1\}}{\arg \min } \left|\mathbb{E}[\nu_a]-\mathbb{E}[\nu_k]\right|$, and $\mathrm{KL}(p,q)$ is the Kullback-Leibler divergence between two probability distributions $p$ and $q$.}\\
\end{theorem}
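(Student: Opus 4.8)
The plan is to establish the bound through a change-of-measure argument, the standard route for information-theoretic lower bounds in the pure-exploration bandit literature \cite{kaufmann2014complexity, MAB_book}. The central tool is the transportation inequality: for any alternative instance $\nu' = (\nu'_1, \ldots, \nu'_{n-1})$ and any event $\mathcal{E} \in \mathcal{F}_\tau$,
\begin{equation*}
\sum_{a=1}^{n-1} \mathbb{E}_{\nu}[T_a]\,\mathrm{KL}(\nu_a, \nu'_a) \;\geq\; \mathrm{kl}\big(\mathbb{P}_{\nu}(\mathcal{E}),\, \mathbb{P}_{\nu'}(\mathcal{E})\big),
\end{equation*}
where $T_a$ is the number of times arm $a$ is pulled before stopping (so that $\tau = \sum_a T_a$) and $\mathrm{kl}(x,y) = x\log\frac{x}{y} + (1-x)\log\frac{1-x}{1-y}$ is the binary relative entropy. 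First I would take $\mathcal{E}$ to be the event that the algorithm returns arm $k$ --- the unique correct answer under $\nu$ --- so that $\mathbb{P}_{\nu}(\mathcal{E}) \geq 1-\delta$ by the $\delta$-correctness assumption.

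The heart of the argument is to design, for each arm $a \in [n-1]$, an alternative instance $\nu^{(a)}$ that agrees with $\nu$ on every arm except $a$ and under which arm $k$ is \emph{no longer} the arm of $k$-th smallest mean; $\delta$-correctness then forces $\mathbb{P}_{\nu^{(a)}}(\mathcal{E}) \leq \delta$. For $a < k$ I would raise the mean of arm $a$ just past $\mathbb{E}[\nu_k]$, and for $a > k$ I would lower it just past $\mathbb{E}[\nu_k]$; in either case the rank of arm $k$ shifts and it ceases to be the $k$-th smallest. The cheapest such perturbation replaces $\nu_a$ by a distribution of mean $\mathbb{E}[\nu_k]$, which is exactly $\nu_k$ within the admissible family (Bernoulli in Model~1, Gaussian of variance $\sigma^2$ in Model~2), contributing $\mathrm{KL}(\nu_a, \nu_k)$. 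For $a = k$ I would instead move the mean of arm $k$ across that of its closest neighbour, i.e. replace $\nu_k$ by $\nu_{k^*}$ with $k^* = \arg\min_{a \in \{k-1,k+1\}} |\mathbb{E}[\nu_a] - \mathbb{E}[\nu_k]|$, again changing the identity of the $k$-th smallest arm, at cost $\mathrm{KL}(\nu_k, \nu_{k^*})$.

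Since each $\nu^{(a)}$ differs from $\nu$ in only one coordinate, the transportation inequality collapses to a single term, yielding $\mathbb{E}_{\nu}[T_a]\,\mathrm{KL}(\nu_a,\nu_k) \geq \mathrm{kl}(1-\delta,\delta)$ for $a \neq k$ and $\mathbb{E}_{\nu}[T_k]\,\mathrm{KL}(\nu_k,\nu_{k^*}) \geq \mathrm{kl}(1-\delta,\delta)$. I would then invoke the elementary bound $\mathrm{kl}(1-\delta,\delta) \geq \log\frac{1}{2.4\delta}$, valid for $\delta \in (0,0.15]$, solve each inequality for $\mathbb{E}_{\nu}[T_a]$, and sum over $a \in [n-1]$ using $\mathbb{E}_{\nu}[\tau] = \sum_a \mathbb{E}_{\nu}[T_a]$ to obtain the claimed lower bound.

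The main obstacle I anticipate is making the perturbation argument rigorous at the boundary: setting the mean of arm $a$ exactly equal to $\mathbb{E}[\nu_k]$ creates a tie, so the $k$-th smallest arm is ill-defined and $\delta$-correctness cannot be applied directly. I would resolve this by perturbing to mean $\mathbb{E}[\nu_k] \pm \varepsilon$, applying the inequality there, and letting $\varepsilon \to 0$, using continuity of $\mathrm{KL}$ in the parameters to recover $\mathrm{KL}(\nu_a, \nu_k)$ in the limit. Care is also needed to check that every alternative stays inside the admissible family (Bernoulli means in $[0,1]$, Gaussian of fixed variance $\sigma^2$) and that, under Sampling Model~1, the restriction to random-sampling algorithms is respected, so that each pull yields a $\mathrm{Bernoulli}(d^i_j)$ reward --- precisely the structure that makes the per-arm cost equal to $\mathrm{KL}(\nu_a,\nu_k)$.
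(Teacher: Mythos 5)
Your proposal is correct and follows essentially the same route as the paper: the transportation inequality of \cite[Lemma 1]{kaufmann2014complexity} applied to single-arm perturbations that shift the identity of the $k$-th smallest arm, the bound $\mathrm{kl}(1-\delta,\delta)\geq\log\frac{1}{2.4\delta}$, and a limiting argument to handle the boundary (your $\varepsilon\to 0$ on the mean plays the role of the paper's $\alpha\to 0$ on the KL gap). No substantive differences.
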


\begin{proof}
Let $\mathrm{kl}(p, q):=p \log \frac{p}{q}+(1-p) \log \frac{1-p}{1-q}$ be the KL divergence between two Bernoulli distributions with mean $p$ and $q$ respectively. Let $N_i[\tau]$ denote the number of times algorithm $\mathcal{A}$ takes reward from $\nu_i$. Next, we state a lemma from \cite{kaufmann2014complexity} to help us in our proof of Theorem \ref{theorem:Lower Bound on query complexity}.

\begin{lemma}
\label{lemma:kaufmann}
\textit{(\cite[Lemma 1]{kaufmann2014complexity}) Let $\nu=\left(\nu_{1}, \ldots, \nu_{n}\right)$ and $\nu^{\prime}=\left(\nu^{\prime}_{1}, \ldots, \nu^{\prime}_{n}\right)$ be two MAB instances. Then for any event $\mathcal{E} \in \mathcal{F}_{\tau}$ with $0<P_{\nu}(\mathcal{E})<1$, the following is true:}
\begin{equation*}
\sum_{i=1}^{n} \mathbb{E}_{\nu}\left[N_{i}(\tau)\right] \operatorname{KL}\left(\nu_{i}, \nu_{i}^{\prime}\right) \geq \operatorname{kl}\left(\mathrm{P}_{\nu}[\mathcal{E}], \mathrm{P}_{\nu^{\prime}}[\mathcal{E}]\right).
\end{equation*}
\end{lemma}

Next, we make an assumption (which has also been made in \cite{kaufmann2014complexity}) that for any two probability distributions $p$ and $q$, and for any $\alpha$, the following is true.
\begin{itemize}
    \item There exists a probability distribution $q_1$ such that $\mathrm{KL}(p,q)<\mathrm{KL}(p,q_1)<\mathrm{KL}(p,q)+\alpha$ and $\mathbb{E}_{X \sim q_{1}}[X]>\mathbb{E}_{X \sim q}[X]$.
    \item There exists a probability distribution $q_2$ such that $\mathrm{KL}(p,q)<\mathrm{KL}(p,q_2)<\mathrm{KL}(p,q)+\alpha$ and $\mathbb{E}_{X \sim q_{2}}[X]<\mathbb{E}_{X \sim q}[X]$.
\end{itemize}

Now we begin with our proof of Theorem \ref{theorem:Lower Bound on query complexity}. For a MAB instance $\nu=\left(\nu_{1}, \ldots, \nu_{n}\right)$ with $\mathbb{E}[\nu_{1}]<\mathbb{E}[\nu_{2}]<\ldots<\mathbb{E}[\nu_{n}]$, we select an alternative MAB instance $\nu^{\prime}=\left(\nu_{1},\ldots,\nu_{a-1},\nu^{\prime}_{a},\nu_{a+1},\ldots,\nu_{n}\right)$, where $\nu^{\prime}_{a}$ is defined such that following is true for a fixed $\alpha$.\ 
\begin{itemize}
    \item $\mathrm{KL}(\nu_a,\nu_k)<\mathrm{KL}(\nu_a,\nu^{\prime}_a)<\mathrm{KL}(\nu_a,\nu_k)+\alpha$ and $\mathbb{E}[\nu^{\prime}_a]>\mathbb{E}[\nu_k]$ for $a<k$.
    \item $\mathrm{KL}(\nu_a,\nu_k)<\mathrm{KL}(\nu_a,\nu^{\prime}_a)<\mathrm{KL}(\nu_a,\nu_k)+\alpha$ and $\mathbb{E}[\nu^{\prime}_a]<\mathbb{E}[\nu_k]$ for $a>k$.
    \item $\mathrm{KL}(\nu_a,\nu_{k^{*}})<\mathrm{KL}(\nu_a,\nu^{\prime}_a)<\mathrm{KL}(\nu_a,\nu_{k^*})+\alpha$ for $a=k$, where $k^{*}={\arg \min } \left|\mathbb{E}[\nu_j]-\mathbb{E}[\nu_k]\right|$ for $j \in\{k-1,k+1\}$, and   $\mathbb{E}[\nu^{\prime}_a]<\mathbb{E}[\nu_{k-1}]$ if $k^{*}=k-1$, otherwise  $\mathbb{E}[\nu^{\prime}_a]>\mathbb{E}[\nu_{k+1}]$.
\end{itemize}
This construction of $\nu^{\prime}$ ensures that the arm with the $k^{th}$ smallest reward in $\nu$ and $\nu^{\prime}$ are different.

We know that $\xi$ is the stopping criteria of algorithm $\mathcal{A}$ for the MAB instance $\nu$. As $\tau$ is the stopping time, we have $\xi \in \mathcal{F}_{\tau}$. By definition, we know that $\mathrm{P}_{\nu}[\xi]\geq 1-\delta$ and $\mathrm{P}_{\nu^{\prime}}[\xi]\leq \delta$. This is because, as $\xi$ is the stopping criteria of $\mathcal{A}$ for MAB $\nu$, it will succeed in it with probability at least $1-\delta$, and fail with any other MAB instance with different optimal arm with probability at most $\delta$. Also, it is easy to show that $\mathrm{KL}(\mathrm{P}_{\nu}[\xi],\mathrm{P}_{\nu^{\prime}}[\xi]) = \mathrm{kl}(1-\delta,\delta) \geq \log (\frac{1}{2.4\delta})$, where the last inequality holds for $\delta \leq 0.15$. Plugging these values in Lemma \ref{lemma:kaufmann}, we get the following equation.
$$\mathrm{KL}\left(\nu_{a}, \nu_{a}^{\prime}\right) \mathbb{E}_{\nu}\left[N_{a}(\tau)\right] \geq \mathrm{kl}(1-\delta, \delta) \geq \log (\frac{1}{2.4 \delta})$$
This is because our construction of $\nu^{\prime}$ is such that $\nu_i=\nu^{\prime}_i$ for $i\neq a$. This implies that $\mathrm{KL}(\nu_i,\nu_i^{\prime})=0$ for $i\neq a$. From our construction of $\nu^{\prime}_a$ and the above inequality, we have the following.

$$
    \mathbb{E}_{\nu}\left[N_{a}(\tau)\right] \geq \begin{cases}
               \log (\frac{1}{2.4 \delta})/\left(\mathrm{KL}(\nu_a,\nu_k)+\alpha \right) & a\neq k\\
               \log (\frac{1}{2.4 \delta})/\left(\mathrm{KL}(\nu_a,\nu_{k^{*}})+\alpha \right) & a= k
           \end{cases}
\label{eq:delta3}
$$
Moreover,
$$\text{ } \mathbb{E}_{\nu}\left[\tau\right] = \sum_{a=1}^{n} \mathbb{E}_{\nu}\left[N_{a}(\tau)\right].
$$
This implies that 
$$
\hspace*{-0.1cm} \mathbb{E}_{\nu}[\tau] \geq \log \left(\frac{1}{2.4 \delta}\right)\left[\sum_{a \in [n]\backslash k} \frac{1}{\mathrm{KL}\left(\nu_{a}, \nu_{k}\right)+\alpha}+\frac{1}{\mathrm{KL}\left(\nu_{k}, \nu_{k^{*}}\right)+\alpha} \right].
\label{eq:Lower bound on QC}
$$
As $\alpha \rightarrow 0$, the above equation gives the result in Theorem \ref{theorem:Lower Bound on query complexity}, thereby completing our proof.
\end{proof}

Using Theorem \ref{theorem:Lower Bound on query complexity}, we are going to find a lower bound on the query complexity for both the sampling models. For sampling model 1, we have assumed that $\nu_j$ has a Bernoulli distribution with $\mathbb{E}[\nu_a]=d^i_j$. Therefore, $\mathrm{KL}\left(\nu_{a}, \nu_{k}\right)$ if given by:
$$\mathrm{KL}\left(\nu_{a}, \nu_{k}\right) = d^i_a \log \left(\frac{d^i_a}{d^i_k}\right) + (1-d^i_a)\log \left(\frac{1-d^i_a}{1-d^i_k}\right).$$
Using the inequality $\log(x) \leq x-1$ we have that:
\begin{equation}
    \label{eq: SM1 LB}
    \mathrm{KL}\left(\nu_{a}, \nu_{k}\right) \leq \frac{(d^i_a-d^i_k)^2}{d^i_k(1-d^i_k)} = \frac{(\Delta^i_a)^2}{d^i_k(1-d^i_k)}.
\end{equation}

In sampling model 2, we have assumed that $\nu_j$ has a Gaussian distribution with mean $d^i_j$ and variance $\sigma^2$. Therefore, from \cite[Table 1]{KL_gaussian}, we have that:
\begin{equation}
    \label{eq: SM2 LB}
    \mathrm{KL}\left(\nu_{a}, \nu_{k}\right) = \frac{(d^i_a-d^i_k)^2}{2\sigma^2} = \frac{(\Delta^i_a)^2}{2\sigma^2}.
\end{equation}

If we plug in \eqref{eq: SM1 LB} and \eqref{eq: SM2 LB} in the result of Theorem \ref{theorem:Lower Bound on query complexity}, we get that $\mathbb{E}_{\nu}[\tau] \geq \mathcal{N}_{low}(\mathbf{x}_i,\mathcal{X})$, where $\mathcal{N}_{low}(\mathbf{x}_i,\mathcal{X})$ is given by the following equation.

$$
    \mathcal{N}_{low}(\mathbf{x}_i, \mathcal{X})=\widetilde{O}\left(\sum_{j=1}^{n-1} \left(\Delta^{i}_{j}\right)^{-2}\right)
$$

This holds for both the sampling models, under our assumptions. As $\mathcal{N}_{low}(\mathbf{x}_i,\mathcal{X})$ and $\mathcal{N}(\mathbf{x}_i,\mathcal{X})$ are of the same order, we can say that Algorithm \ref{algo:AdaptiveKNN} is optimal.
\section{Experimental Setup and Additional Experiments}
\label{appendix:Additional_Experiments}
In this section we provide additional details about the experimental setup. All the plots that we have reported have been passed through a Savitzky-Golay filter to make them smoother. For sampling model 1, as the naive method to find the mode requires $mn^2$ queries, we report the number of queries for this model after dividing them by $mn^2$, to understand the reduction in queries achieved. For uniformity, we normalize the number of queries for sampling model 2 by $mn^2$ as well.

The concentration bound used in the experiments is $\beta(u) = \sqrt{(C_{\beta} \log(1+(1+\log(u))n/\delta))/u}$. This concentration bound is same as the one used in \cite{adaptiveKNN} during experiments. We choose the appropriate value of $C_\beta$ by varying it and studying its affect on the accuracy and the the number of queries. For each value of $C_\beta$, we perform 25 random trials and the average accuracy and the average number of queries are plotted. We perform this experiment for $n=100$ and $k=10$. The results are plotted in Figure \ref{fig: C_beta_var_1} (Sampling model 1) and \ref{fig: C_beta_var_2} (Sampling model 2). A higher $C_{\beta}$ implies a larger confidence interval which results in  better accuracy, but also leads to an increase in the query complexity. This plot helps us to choose the value of $C_{\beta}$, for a given requirement on accuracy. From these figures, we have selected the value of $C_\beta$ to be $0.03$ for sampling model 1 and $0.01$ for sampling model 2, so as to keep the average accuracy = $1$ for all the subsequent experiments.

\begin{figure}[!htb]
\centering
\includegraphics[width=\linewidth]{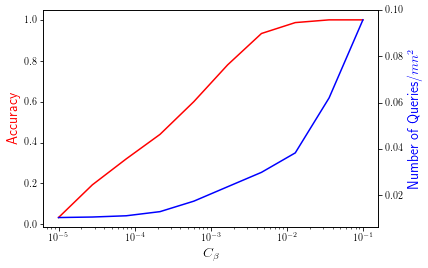}
\caption{ Varying $C_{\beta}$: Sampling model 1  }
\label{fig: C_beta_var_1}
\end{figure}

\begin{figure}[!htb]
\centering
\includegraphics[width=\linewidth]{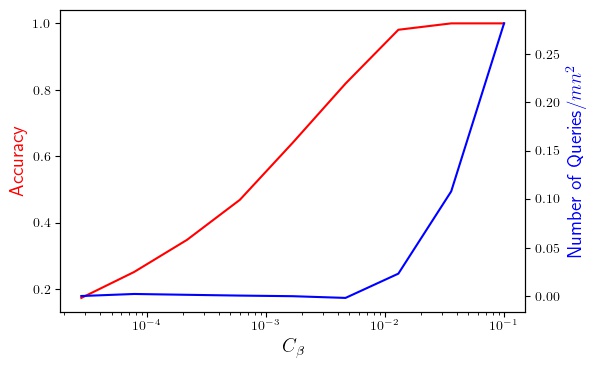}
\caption{ Varying $C_{\beta}$: Sampling model 2 }
\label{fig: C_beta_var_2}
\end{figure}

\subsection{Experiment 1: Accuracy vs Number of queries}
\label{C_beta variation}
Similar to Figure \ref{fig: NumQueriesAcc1}, we also compare the performance of our algorithm, for a fixed budget of queries, with \textit{Random Sampling} and \textit{Naive+}, for sampling model 2. We perform 200 random trials and the average accuracy for different number of queries are plotted for $n=100$ and $k=10$. 

Figure \ref{fig: NumQueriesAcc2} contains the plot of Accuracy vs. Number of Queries for Algorithm \ref{algo:AdaptiveKNN} and the two baselines, for sampling model 2. Similar to sampling model 1, Algorithm \ref{algo:AdaptiveKNN} comprehensively outperforms the two baselines. It has better accuracy for any given number of queries, and it also requires fewer number of queries to achieve accuracy = 1.

\begin{figure}[h]
\centering
\includegraphics[width=\linewidth]{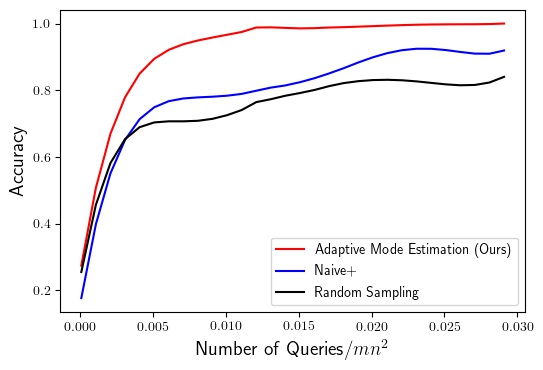}
\caption{ Accuracy vs Number of queries: Sampling model 2}
\label{fig: NumQueriesAcc2}
\end{figure}

\subsection{Experiment 2: Varying n }
\label{n variation}
Here, we study the effect of increasing $n$ on the number of queries, for average accuracy = $1$, for different values of $k$. As our upper bound \eqref{eq : UB ME} has $O(n^2)$ summation terms, normalizing the number of queries by $mn^2$ helps to understand the effect of increasing $n$ better. For every value of $n$, we perform 20 random trials and the average number of queries are plotted. For a given value of $n$, we choose 3 different values of  $k$, i.e., $k=\frac{n}{5}$, $k=\frac{n}{10}$ and $k=\frac{n}{20}$.

The results are plotted in Figure \ref{fig: N_variation_1} (Sampling model 1) and \ref{fig: N_variation_2} (Sampling model 2). From the plots, we can conclude that the rate of increase in the number of queries w.r.t $n$ is less than that of $n^2$. Hence, gain in terms of the number of queries is more significant for larger datasets. Also, we can see that for the Tiny Imagenet dataset, the number of queries increase as the value of $k$ increases. This is as expected since the task of estimating the $k$-NN distance becomes harder as more nodes from the dataset are included.

\begin{figure}[!htb]
\centering
\includegraphics[width=\linewidth]{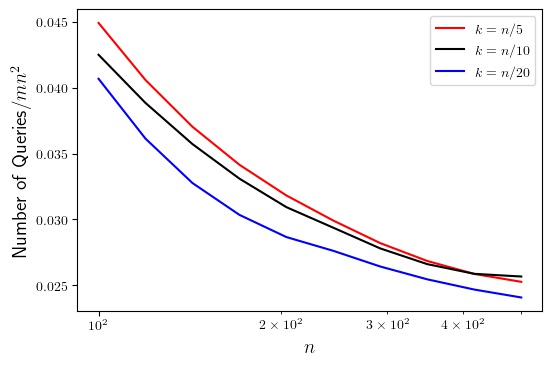}
\caption{ Varying $n$: Sampling model 1  }
\label{fig: N_variation_1}
\end{figure}

\begin{figure}[!htb]
\centering
\includegraphics[width=\linewidth]{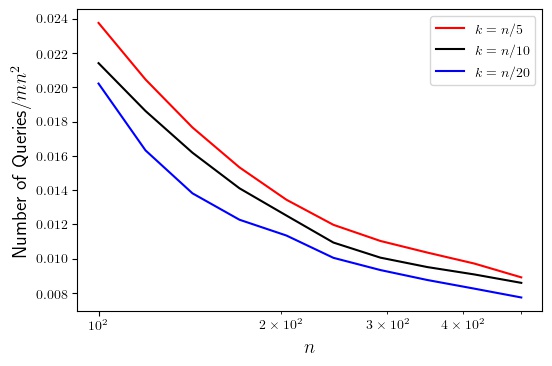}
\caption{ Varying $n$: Sampling model 2 }
\label{fig: N_variation_2}
\end{figure}

\end{document}